\newcommand{\marg}{\operatorname{marg}}
\newcommand{\compl}{\mathsf{complexity}}
\newcommand{\empir}{\mathsf{empirical}}
\newcommand{\gener}{\mathsf{gen}\mydash\mathsf{err}}
\newcommand{\err}{R}
\newcommand{\serr}{\widehat{R}_n}
\newcommand{\wh}{\widehat}
\newcommand{\mc}[1]{\mathcal{#1}}
\newcommand{\radem}{\mc{R}}
\newcommand{\dist}{\rho} %
\mathchardef\mhyphen="2D
\newcommand{\loss}{\mathcal{L}}
\newcommand{\norm}[1]{\| #1 \|}
\newcommand{\g}{\gamma}
\newcommand{\ddim}{\operatorname{ddim}}
\renewcommand{\dim}{d}
\newcommand{\Lip}[1]{\nrm{#1}_{\textrm{{\tiny \textup{Lip}}}}}
\newcommand{\sign}{\operatorname{sign}}
\newcommand{\argmax}{\mathop{\mathrm{argmax}}}
\newcommand{\inv}{^{-1}} %
\newcommand{\knn}{_{k\mydash\textrm{{\tiny \textup{NN}}}}}
\newcommand{\mydash}{\text{-}}
\newcommand{\X}{\calX}
\newcommand{\Y}{\mathcal{Y}}
\newcommand{\F}{\mathcal{F}}
\newcommand{\chr}{\mathds{1}}
\newcommand{\pred}[1]{\chr_{\left\{ #1 \right\}}}
\newcommand{\E}{\mathbb{E}}
\newcommand{\diam}{\operatorname{diam}}
\renewcommand{\P}{\mathbb{P}}
\newcommand{\ben}{\begin{enumerate}}
\newcommand{\een}{\end{enumerate}}
\newcommand{\bit}{\begin{itemize}}
\newcommand{\eit}{\end{itemize}}
\def\clap#1{\hbox to 0pt{\hss#1\hss}}
\newcommand{\nrm}[1]{\left\Vert #1 \right\Vert}
\newcommand{\calX}{\mathcal{X}}
\newcommand{\R}{\mathbb{R}}
\newcommand{\N}{\mathbb{N}}
\newcommand{\beq}{\begin{eqnarray*}}
\newcommand{\eeq}{\end{eqnarray*}}
\newcommand{\beqn}{\begin{eqnarray}}
\newcommand{\eeqn}{\end{eqnarray}}
\newcommand{\paren}[1]{\left( #1 \right)}
\newcommand{\tlprn}[1]{\left\{ #1 \right\}}
\newcommand{\set}[1]{\tlprn{#1}}
\newcommand{\abs}[1]{\left| #1 \right|}
\newcommand{\smabs}[1]{| #1 |}
\newcommand{\gn}{\, | \,}
\newcommand{\ds}{\displaystyle}
\newcommand{\hide}[1]{}
\newcommand{\oo}[1]{\frac{1}{#1}}
\def\eps{\varepsilon}
\newtheorem{theorem}{Theorem}
\newtheorem{lemma}[theorem]{Lemma}
\newcommand{\bepf}{\begin{proof}}
\newcommand{\enpf}{\end{proof}}
\def\longto{\mathop{\longrightarrow}\limits}
\newcommand{\ninf}{\longto_{n\to\infty}}
\newcommand{\ninfas}{\longto_{n\to\infty}^{\mbox{a.s.}}}
\newcommand{\bay}{^*}
\newcommand{\risk}{R}
\newcommand{\emprisk}{\wh{\risk}}
\newcommand{\srm}{{\textrm{{\tiny \textup{PEN}}}}}
\newcommand{\sloss}{\Phi}
\newcommand{\srisk}{\mc{L}}
\newcommand{\empsrisk}{\wh{\mc{L}}}
\newcommand{\nd}{n_{\dim}}
\newcommand{\gap}{\xi}
\newcommand{\cl}{\operatorname{cl}}
\newcommand{\pstar}{\mbox{(*)}}
\newcommand{\psstar}{\mbox{(**)}}
\newcommand{\pssstar}{\mbox{(***)}}
\newcommand{\cc}{\!} %conference contract
\title{A Bayes consistent 1-NN classifier}
\author{
Aryeh Kontorovich and Roi Weiss \\
Computer Science Department \\
Ben Gurion University\\
Beer Sheva, Israel \\
\texttt{\{karyeh,roiwei\}@cs.bgu.ac.il}
}
\begin{document}
\maketitle

\begin{abstract}
We show that a simple modification 
of the $1$-nearest neighbor classifier yields a 
strongly
Bayes consistent learner.
Prior to this work,
the only 
strongly
Bayes consistent proximity-based method
was the
$k$-nearest neighbor classifier,
for $k$ growing appropriately with sample size.
We will argue that a margin-regularized $1$-NN
enjoys considerable statistical 
and algorithmic advantages
over the $k$-NN classifier.
These include
user-friendly finite-sample error bounds,
as well as
time- and memory-efficient
learning and test-point evaluation algorithms with
a principled speed-accuracy tradeoff.
Encouraging empirical results are reported.
\end{abstract}

\section{Introduction}
The nearest neighbor (NN) classifier, introduced by 
Fix and Hodges in 1951, continues to be a popular learning algorithm 
among practitioners.
Despite the numerous sophisticated techniques developed in recent years,
this deceptively simple method continues to
``yield[] competitive results''
\citep{DBLP:journals/jmlr/WeinbergerS09}
and inspire papers 
in
``defense of nearest-neighbor based [\ldots] classification''
\citep{DBLP:conf/cvpr/BoimanSI08}.

In the sixty years since the introduction of the nearest neighbor paradigm,
a large amount of theory has been developed for analyzing this 
surprisingly effective classification method.
The first such analysis is due to \citet{CoverHart67}, 
who showed that as the sample size grows,
the $1$-NN classifier
almost surely approaches
an error rate $R\in[R^*,2R^*(1-R^*)]$, where $R^*$
is the Bayes-optimal risk. Although the $1$-NN classifier is not 
in general
Bayes consistent,
taking a majority vote among the $k$ nearest neighbors 
does guarantee strong Bayes consistency,
provided that $k$ increases appropriately in sample size
\citep{stone1977,devroye1985nonparametric,zhao1985exponential}.

The $k$-NN classifier 
in some sense 
addresses the Bayes consistency problem,
but
presents issues of its own. 
A naive implementation involves storing the entire sample,
over which a linear-time search is performed when answering queries on test points.
For large samples sizes, 
this approach is prohibitively expensive 
in terms of storage memory and computational runtime.
To mitigate the memory concern,
various {\em condensing} heuristics have been proposed 
\citep{hart1968condensed,gates72,ritter75,wilson00,gottlieb2018near}
--- of which only the one in \citet{gottlieb2018near} 
comes with any rigorous compression guarantees,
and only for 
$k=1$;
moreover, it is shown therein that the condensing problem is ill-posed for $k>1$.
Query evaluation on test points
may be significantly sped up via an {\em approximate} nearest neighbor search
\citep{KL04,beygelzimer2006cover,andoni2006near,DBLP:conf/colt/GottliebKK10}.
The price one pays for the fast approximate search is a degraded classification accuracy,
and 
of the 
works cited,
only \citet{DBLP:conf/colt/GottliebKK10}
quantifies 
this tradeoff
---
and again,
only for $1$-NN. 

On the statistical front, one desires a classifier that provides an easily computable {\em usable}
finite-sample generalization bound --- one that the learner can evaluate based only on the observed
sample so as to obtain a high-confidence error estimate. 
As we argue below, existing $k$-NN bounds
fall short of this desideratum, and the few known usable bounds given in
\citet{DBLP:journals/jmlr/LuxburgB04,DBLP:conf/colt/GottliebKK10,gottlieb2018near} are all for
$k=1$.

Motivated by the computational and statistical advantages that $1$-NN seems to enjoy over $k$-NN,
this paper presents a 
strongly
Bayes consistent $1$-NN classifier.

\paragraph{Main results.}
Our results build on the work of \citet{DBLP:conf/colt/GottliebKK10} and, more recently, \citet{gottlieb2018near}.
Suppose we are given an iid training sample $S$ consisting
of $n$ labeled points $(X_i,Y_i)$, 
with $X_i$
residing in some metric space $\X$ and $Y_i\in\set{-1,1}$.
For $\eps,\g>0$,
let us say that $S$ is
$(\eps,\g)$-{\em separable}
if there is a sub-sample
$\tilde S\subset S$
such that
\bit
\item[(i)]
the $1$-NN classifier
induced by $\tilde S$ mislabels at most $\eps n$ points in $S$
and
\item[(ii)]
every pair of opposite-labeled points in $\tilde S$
is at least $\g$ apart in distance.
\eit
Obviously, a given
sample $S$ 
cannot be 
$(\eps,\g)$-{separable}
for 
$\eps$ 
arbitrarily small
and $\g$
arbitrarily large.
Every $\g>0$ determines some minimum feasible $\eps^*=\eps^*(\g)$
and a corresponding 
$\eps^*$-consistent, $\g$-separable sub-sample $S^*(\g)\subset S$.

Margin-based generalization bounds were presented in
\citet{DBLP:conf/colt/GottliebKK10,gottlieb2018near}, with $\eps$ corresponding to empirical error
and $\g$ to the {\em margin}. Schematically, these bounds are of the form
\beqn
\label{eq:schem}
\gener_n(\eps,\g) \le \empir_n(\eps,\g) + \compl_n(\g),
\eeqn
where 
$\gener$ is the generalization error
of the $1$-NN classifier induced by an
$\eps$-consistent, $\g$-separable $\tilde S\subset S$,
and 
the two terms on the right-hand side correspond roughly to sample error and hypothesis complexity.
The approach proposed in 
\citet{DBLP:conf/colt/GottliebKK10,gottlieb2018near}
suggests computing $\eps^*(\g)$ for each $\g>0$ and minimizing the right-hand side
of (\ref{eq:schem}) over $\g$ to obtain $\g_n^*$. 
Indeed, the chief technical contribution of those works
consisted of providing efficient algorithms for computing
$\eps^*(\g)$, $S^*(\g)$, and $\g_n^*$.
In contrast, the present paper deals with the statistical aspects of this procedure.
Our main contribution is Theorem~\ref{thm:main}, which shows that the 
$1$-NN classifier induced by $S^*(\g_n^*)$ is strongly Bayes consistent.
Denoting this classifier by $h_n$,
our main result is formally stated as follows:
\beq
\P\paren{ h_n(X)\neq Y \gn (X_1,Y_1),\ldots, (X_n,Y_n)}
&\ninfas& R^*,
\eeq
where
$$
R^* = \inf_{h:\X\to\set{-1,1}}\P(h(X)\neq Y)
$$
is the Bayes-optimal error.
This is the first consistency result (strong or otherwise) for an
algorithmically efficient
$1$-NN classifier.

\paragraph{Related work.}
Following the pioneering work of 
%Cover and Hart 
\citet{CoverHart67},
it was shown by
\citet{devroye1985nonparametric,zhao1985exponential}
that the $k$-NN classifier
is strongly Bayes consistent.
A representative result for 
the Euclidean space
$\X=\R^d$ states that
if $k\to\infty$ and $k/n\to0$, then for all $\eps>0$ and 
$n>n_0(\eps,k)$,
\beqn
\label{eq:knn-gyorfi}
\P(\err(h\knn)>R^*+\eps) \le 2\exp\paren{-\frac{n\eps^2}{5184 \kappa_d^2}},
\eeqn
where
$\kappa_d<\paren{1+2/\sqrt{2-\sqrt{3}}}^d$
is the minimum number of 
origin-centered cones of angle $\pi/6$ that cover $\R^d$
(this result, among many others, is proved in \citet{MR1383093}).
Given the inherently Euclidean nature of $\kappa_d$, 
(\ref{eq:knn-gyorfi})
does not seem to
readily extend
to more general metric spaces.
It was (essentially) shown in
\citet{shwartz2014understanding} that
\beqn
\label{eq:knn-shai}
\E[\err(h\knn)] \le \paren{1+\sqrt{8/k}}R^* + \paren{6L+k}n^{-1/(\dim+1)}
\eeqn
for metric spaces $\X$ with unit diameter and doubling dimension $\dim$ (defined below),
where $L$ is the Lipschitz constant of 
$\eta:\X\to[0,1]$ defined by
$\eta(x)=\P(Y=1\gn X=x)$.
Recently, some of the classic results on $k$-NN risk decay rates were refined
by
\citet{chaudhuri2014rates}
in an analysis that captures the interplay between the metric and the sampling distribution.

Although (\ref{eq:knn-gyorfi},\ref{eq:knn-shai}) are both finite-sample bounds,
they do not enable
a practitioner to compute a numerical generalization error estimate for a given
training sample. Both are stated in terms of the unknown Bayes-optimal rate $R^*$,
and (\ref{eq:knn-shai}) additionally depends on $L$, a property of the unknown distribution.
In particular,
(\ref{eq:knn-gyorfi}) and (\ref{eq:knn-shai}) do not allow for a data-dependent selection
of $k$, which must be tuned via cross-validation.
The asymptotic expansions in \citet{snapp1998asymptotic,psaltis1994finite} 
likewise do not provide a computable
finite-sample bound.

An entire chapter in
\citet{MR1383093} is devoted to condensed and edited NN rules.
In the terminology of this paper, this amounts to extracting a sub-sample
$\tilde S$
and predicting via the $1$-NN classifier induced by that
$\tilde S$. 
Assuming a certain sample compression rate and an oracle for choosing an optimal
fixed-size 
$\tilde S$,
this scheme is shown to be weakly Bayes consistent.
The generalizing power of sample compression was independently 
discovered by \citet{warmuth86}, and later elaborated upon by 
\citet{graepel2005pac}.
In the context of NN classification,
\citet{MR1383093} list various condensing heuristics (which have no known performance guarantees)
and also leaves open the algorithmic question how to minimize the
empirical loss over all subsets of a given size.

The first substantial departure from the $k$-NN paradigm was proposed by
\citet{DBLP:journals/jmlr/LuxburgB04}, with the 
straightforward but far-reaching observation
that the $1$-NN classifier is, in some sense, 
equivalent to 
interpreting
the labeled sample $\set{(X_i,Y_i):i\in[n]}$
as $n$ evaluations of a real-valued target function $f$,
computing its
Lipschitz extension $f^*$ from the sample points to all of $\X$,
and then classifying test points by $\sign(f^*(\cdot))$.
Following up,
\citet{DBLP:conf/colt/GottliebKK10}
obtained bounds on the fat-shattering dimension of Lipschitz functions in doubling spaces
and gave 
margin-based risk bounds decaying
%the first risk decay 
as $\tilde O(n^{-1/2})$ as opposed to $n^{-1/\dim}$.
More recently, the existence of a margin was leveraged to give nearly 
optimal sample compression bounds, with corresponding
generalization guarantees~\citep{gottlieb2018near}.

%\vspace{-.25cm}

\section{Preliminaries}
\label{sec:prelim}
\paragraph{Metric spaces.} 
Throughout this paper, our instance space $\X$ will be endowed with a bounded metric $\dist$,
which we will normalize to have unit diameter\footnote{
This assumption is not really restrictive, as any finite sample will be contained in some ball.
The situation is analogous to margin-based analysis of Euclidean hyperplanes,
where the quantity of interest is the ratio between data diameter and 
%separation
geometric
margin.
}:
\[
\diam(\X):=\sup_{x,x'\in\X}\dist(x,x')=1.
\]
A function $f: \X \to \R$ 
is said to be $L$-Lipschitz if
$
\abs{f(x)-f(x')} \leq L \dist(x,x')
$
for all $x,x'\in\X$. 
The Lipschitz  constant of $f$, denoted $\Lip{f}$, 
is the smallest $L$ for which $f$ is $L$-Lipschitz.
The collection of all $L$-Lipschitz $f:\X\to[-1,1]$ will be denoted by
$\F_L$.
The distance between two sets $A,B\subset\X$ is defined by 
$\dist(A,B)=\inf_{x\in A,x'\in B}\dist(x,x')$.

For a metric space $(\X, \dist)$, let $\lambda$ be the smallest value such that every ball in $\X$ can be
covered by $\lambda$ balls of half the radius. 
The {\em doubling dimension} of $\X$ is $\ddim(\X):=\log_2 \lambda$. A metric is
{\em doubling} when its doubling dimension is finite.
We will denote $\dim:=\ddim(\X)<\infty$.

\paragraph{Learning model.}
We work in the standard {\em agnostic} learning model 
\citep{mohri-book2012,shwartz2014understanding},
whereby the learner receives a sample $S$ consisting of $n$ labeled examples
$(X_i,Y_i)$, drawn iid from an unknown distribution 
over $\X\times\set{-1,1}$. 
All subsequent probabilities and expectations will be with respect to 
this distribution.
Based on the training sample $S$,
the learner produces a {\em hypothesis} $h:\X\to\set{-1,1}$,
whose {\em empirical error} is defined by $\serr(h)=n\inv\sum_{i=1}^n\pred{h(X_i)\neq Y_i}$
and whose {\em generalization error} is defined by $\err(h)=\P(h(X)\neq Y)$.
The Bayes-optimal classifier, $h\bay$, is defined by 
$$h\bay(x)=\argmax_{y\in\set{-1,1}}\P(Y=y\gn X=x)$$
and 
$$R^*:=\err(h\bay)=\inf\set{\err(h)},$$ where the infimum is over all measurable hypotheses.
A learning algorithm mapping a sample $S$ of size $n$ to a hypothesis $h_n$ is said to be
strongly Bayes consistent if $\err(h_n)\ninf R^*$ almost surely.

\paragraph{Sub-sample, margin, and induced $1$-NN.} 
In a slight abuse of notation, we will blur the distinction
between $S\subset\X$ as a collection of points in a metric space
and $S\in (\X\times\set{-1,1})^n$ as a sequence
of labeled examples.
Thus, the notion of a {\em sub-sample} $\tilde S\subset S$
partitioned 
into its positively and negatively labeled subsets as
$\tilde S=\tilde S_+\cup \tilde S_-$ is well-defined.
The {\em margin} of $\tilde S$, defined by
$$\marg(\tilde S)=\dist(\tilde S_+,\tilde S_-),$$
is the minimum distance between a pair
of opposite-labeled points (see Fig.~\ref{fig:marg}).
A sub-sample $\tilde S$ naturally induces the $1$-NN classifier
$h_{\tilde S}$, via
\beq
h_{\tilde S}(x) = \sign(\dist(x,\tilde S_-)-\dist(x,\tilde S_+)).
\eeq

\paragraph{Margin risk.} 
For 
a given sample $S$ of size $n$,
any $\g>0$ and measurable $f:\X\to\R$,
we define
the {\em margin risk}
$$\risk_\g(f) = \P( Y f(X) < \g )$$
and
its empirical version
$$\emprisk_{n,\g}(f) = \oo n\sum_{i=1}^n \chr_{\{Y_i f(X_i) < \g\}}.$$
When $\g=0$, we omit it from the subscript;
thus, e.g., $\risk(f) = \P( Y f(X) < 0 )$,
which
agrees
with 
our previous
definitions of $\risk(h)$ and $\emprisk_n(h)$ 
%above 
for binary-valued $h$.
\tikzset{
	plusS/.style = {  },
	plustS/.style = {double = black!30, double distance=1pt},
	minusS/.style = {},
	minustS/.style = {double = black!30, double distance=1pt}
}
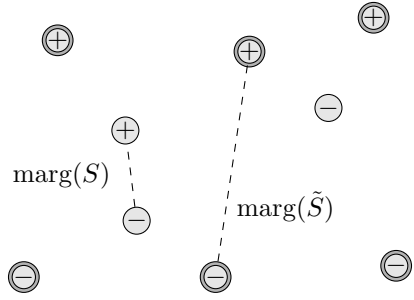
\begin{figure}%
\centering
\begin{tikzpicture}[scale = 1.5,
every node/.style = {draw, shape = circle, minimum size = 2.1mm,inner sep = 0mm, fill = black!10}
]
\node  [plusS] (p1) at (-1.1,-.2) {$+$};
\node  [minusS] (m1) at (-1,-1) {$-$};
\node  [plustS] (p2) at (.0, .5) {$+$};
\node  [minustS] (m2) at (-.3,-1.5) {$-$};
\node  [plustS] (ps) at (-1.7,.6) {$+$};
\node  [plustS] (ps3) at (1.1,.8) {$+$};
\node  [minustS] (mts) at (1.3,-1.4) {$-$};
\node  [minusS] (ms) at (.7,0) {$-$};
\node  [minustS] (ms) at (-2,-1.5) {$-$};
\draw  [dashed] (m1) -- (p1) node [left=7pt, midway, draw=none, fill = none] {$\marg(S)$};
\draw  [dashed](p2) -- (m2) node [above right=18pt,draw=none, fill = none] {$\marg(\tilde S)$};
\end{tikzpicture}
\caption{In this example, the sub-sample $\tilde S\subset S$ is indicated by double circles.
It is always the case that
$\marg(\tilde S)\ge\marg(S)$.}%
\label{fig:marg}%
\end{figure}

\section{Learning Algorithm: Regularized $1$-NN}
\label{sec:learn-alg}
This section is provided to cast known results 
(or their minor modifications)
in the terminology of this paper. 
As the 
main contribution of this paper
is a Bayes-consistency analysis of 
a particular learning algorithm, we must first provide the details of the latter.
The learning algorithm 
in question
is essentially the one
given in \citet{DBLP:conf/colt/GottliebKK10}.
Our point of departure is the connection made by
\citet{DBLP:journals/jmlr/LuxburgB04} between Lipschitz functions and
$1$-NN classifiers.
\begin{theorem}[\citet{DBLP:journals/jmlr/LuxburgB04}]
\label{thm:nn-lip}
If $\tilde S$ is a sub-sample 
with 
$\marg(\tilde S)\ge\g$,
then there is an 
$f\in\F_{2}$
such that 
$$
h_{\tilde S}(x)=\sign(f(x))
$$
for all $x\in\X$.
More explicitly, 
$f\in\F_{2}$
is a {\em Lipschitz extension}
of $\tilde S$, satisfying 
\beqn
\label{eq:lip-ext}
f(x) = f_{\tilde S}(x) =
\begin{cases} 
+\g
, &\mbox{if }  x\in \tilde S_+
\\
-\g, & \mbox{if } x\in \tilde S_-
. 
\end{cases} 
\eeqn
\end{theorem}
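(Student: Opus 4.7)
The plan is to exhibit an explicit Lipschitz extension built from the set-distance functions. Define
$$
g(x) \;=\; \dist(x,\tilde S_-) - \dist(x,\tilde S_+).
$$
First I would verify that $g$ is $2$-Lipschitz: for any set $A\subset\X$, the triangle inequality gives that $x\mapsto\dist(x,A)$ is $1$-Lipschitz, so $g$, being a difference of two such functions, is $2$-Lipschitz. Next, I would check the values of $g$ on $\tilde S$: for $x\in\tilde S_+$ we have $\dist(x,\tilde S_+)=0$ and $\dist(x,\tilde S_-)\ge\dist(\tilde S_+,\tilde S_-)=\marg(\tilde S)\ge\g$, so $g(x)\ge\g$; symmetrically, $g(x)\le-\g$ on $\tilde S_-$.

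To enforce the exact boundary condition in (\ref{eq:lip-ext}) and the range constraint required by $\F_2$, I would clamp $g$ to the interval $[-\g,\g]$ by setting
$$
f(x) \;=\; \min\set{\g,\;\max\set{-\g,\;g(x)}}.
$$
Truncation by a constant is a $1$-Lipschitz operation on $\R$, so $f$ remains $2$-Lipschitz. Since $\g\le\diam(\X)=1$, the range of $f$ lies in $[-\g,\g]\subseteq[-1,1]$, so $f\in\F_2$. The boundary values $f(x)=\g$ on $\tilde S_+$ and $f(x)=-\g$ on $\tilde S_-$ follow directly from the inequalities in the previous paragraph.

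Finally, since clamping about zero preserves sign,
$$
\sign(f(x)) \;=\; \sign(g(x)) \;=\; \sign\paren{\dist(x,\tilde S_-) - \dist(x,\tilde S_+)} \;=\; h_{\tilde S}(x)
$$
for every $x\in\X$, which is the desired identity.

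There is essentially no real obstacle here: the construction is a standard McShane-type Lipschitz extension specialized to the two disjoint, $\g$-separated point sets $\tilde S_+$ and $\tilde S_-$. The only small technicalities are (a) choosing the truncation level to be exactly $\g$ (rather than $1$) so that the boundary values match (\ref{eq:lip-ext}) exactly, and (b) observing that the normalization $\diam(\X)=1$ automatically yields $\g\le 1$, which is what places $f$ in $\F_2$ rather than merely in the larger class of $2$-Lipschitz real-valued functions.
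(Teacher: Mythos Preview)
Your construction is correct and is in fact the canonical one: you take $g(x)=\dist(x,\tilde S_-)-\dist(x,\tilde S_+)$, observe it is $2$-Lipschitz and has the right sign, and then clamp to $[-\g,\g]$ to hit the boundary values exactly while staying in $\F_2$. The only point worth tightening is the justification that $\g\le1$: what you are using is $\g\le\marg(\tilde S)\le\diam(\X)=1$, which requires both $\tilde S_+$ and $\tilde S_-$ to be nonempty; you may want to mention this (the degenerate case where one class is empty is trivial anyway, since then $h_{\tilde S}$ is constant and one can take $f\equiv\pm\g$).

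As for comparison with the paper: there is nothing to compare against. The paper does not supply a proof of this theorem; it is stated with attribution to von~Luxburg and Bousquet and used as a black box. Your argument is exactly the standard McShane/distance-function construction that underlies that cited result, so you have effectively filled in what the paper leaves to the reference.
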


We will only consider members of $\F_{2}$
that are Lipschitz-extensions of $\g$-separable sub-samples
and will never need to actually calculate these explicitly;
their only purpose is to facilitate the analysis.
In line with the Structural Risk Minimization (SRM) paradigm,
our learning algorithm consists of minimizing the {\em penalized margin risk},
\beqn
\label{eq:Rdef}
\emprisk^\srm_{n,\g}(f) &=&
\emprisk_{n,\g}(f)
+ r^\srm(n,\g),
\eeqn
where
\beqn
\label{eq:rsrmdef}
r^\srm( n, \g )  &=&
\frac{4}{\g}
\paren{\frac{c_{\dim} }{n}}^{\frac{1}{2(\dim+1)}}
\\\nonumber& +&
\sqrt{\frac{\frac{c_1}{\dim+1} \log\paren{n/c_\dim} + 2 c_1\log \log \frac{2e}{\g} }{n}}
\eeqn
and $c_1$, $c_\dim$ are explicitly computable constants, the latter depending only on $\dim$.
The form of the penalty term (which is different from the penalty
term in~\citet{DBLP:conf/colt/GottliebKK10})
will be motivated by the analysis in the sequel.

This optimization is performed via two nested routines:
the inner one minimizes
$\emprisk_{n,\g}^\srm(f)$ over $f\in\F_{2}$  for a fixed $\g$,
while the outer one minimizes over $\g>0$.
Since this is a very slight modification of the SRM procedure proposed and analyzed in
\citet{DBLP:conf/colt/GottliebKK10}, we will give a high-level sketch.

\paragraph{Inner routine: optimizing over $f\in\F_{2}$.}
By Theorem~\ref{thm:nn-lip}, 
minimizing $\emprisk_{n,\g}^\srm(f)$ over $f\in\F_{2}$  for a fixed $\g$
is equivalent to seeking a $\g$-separable $\tilde S\subset S$
whose induced $1$-NN classifier $h_{\tilde S}$ makes the fewest mistakes on $S$
(see Algorithm~\ref{alg:inner}).
The algorithm invokes a minimum vertex cover routine, which
by K\"onig's theorem
is equivalent to maximum matching
for bipartite graphs, and is computable in randomized time $O(n^{2.376})$
\citep{1033180}.

\begin{algorithm}
\caption{
minimizing $\emprisk_{n,\g}^\srm(f)$ over $f\!\in\!\F_{2}$ for fixed $\g$
}
\label{alg:inner}
\begin{algorithmic}[1]
\Function{Inner}{$S$,$\g$}
\State construct bipartite graph $G=(S_+,S_-,E)$ with 
$$E=\set{(x,x'):x\in S_+,~x'\in S_-,~\dist(x,x')<\g}$$
\State compute minimum vertex cover 
$C$
%=C_+\cup C_-$
for $G$
\State\Return $\tilde S=S\setminus C$
\EndFunction
\end{algorithmic}
\hide{}
\end{algorithm}

\paragraph{Outer loop: minimizing over $\g>0$.}
Although 
$\g$ takes on 
a continuum of 
values, we need only consider those induced by distances between opposite-labeled points in $S$, of which there are $O(n^2)$. For each candidate $\g$, Algorithm~\ref{alg:inner}
computes the optimal $ f_{n,\g}^*\in\F_{2}$.
Let $\g_n^*$ be a minimizer of ${\emprisk}_{n,\g}^\srm( f_{n,\g}^*)$,
with corresponding $ f_{n}^*:= f_{n,\g_n^*}^*$:
\beqn
\label{eq:Rsrm}
\begin{array}{rcl}
\emprisk_{n,*}^\srm 
&:=& {\ds \inf_{\g > 0}\inf_{f\in\F_{2}}{\emprisk}_{n,\g}^\srm(f) 
\phantom{\sum_{a_a}}
}
\\[3pt]
& = & {\ds \inf_{\g > 0}{\emprisk}_{n,\g}^\srm(f_{n,\g}^*) }
\\[10pt]
&= & { \emprisk_{n,\g_n^*}^\srm(f^*_{n})}
.
\end{array}
\eeqn

The total runtime for computing
$\g_n^*$ 
and
$f_n^*$
is $O(n^{4.376})$, which may be considerably sped up
if one is willing to tolerate a small approximation factor
\citep{DBLP:conf/colt/GottliebKK10,gottlieb2014efficient}.

\section{Consistency proof}
\label{sec:consist-proof-roi}

\begin{table*}[t]%
\centering
\begin{tabular}{l|l|l|c}
symbol 
& meaning
& formally
& Eq.  
\\
%\hhline{=|=|=|=}
\hline
$\risk_{\g}{(f)}$  
& \parbox[c][17pt]{150pt}{$\g$-margin risk}
& $ \P( Y f(X) < \g )$
&  

\\[3pt]
%\hline
$\emprisk_{n,\g}(f)$  
&  \parbox[c][17pt]{150pt}{empirical $\g$-margin risk}
& $\oo n\sum_{i=1}^n \chr_{\{Y_i f(X_i) < \g\}} $
&  

\\[3pt]
%\hline
$\emprisk_{n,\g}^\srm(f)$  
&  \parbox[c][17pt]{150pt}{penalized  empirical $\g$-margin risk}
& $\emprisk_{n,\g}(f) + r^\srm(n,\g)$
& (\ref{eq:Rdef},\ref{eq:rsrmdef})

%\\[3pt]
%%\hline
%$r^\srm(n,\g)$
%& \parbox[c][17pt]{150pt}{penalty term }
%&
%&  (\ref{eq:rsrmdef})

\\[3pt]
%\hline
$\emprisk_{n,*}^\srm$  
&  \parbox[c][17pt]{150pt}{optimal penalized  empirical risk} 
& $\inf_{\g > 0}\inf_{f\in\F_{2}}{\emprisk}_{n,\g}^\srm(f)$
&   (\ref{eq:Rsrm})

\\[3pt]
%\hhline{=|=|=|=}
%\hline
$f_{n,\g}^*$  
&  \parbox[c][17pt]{150pt}{optimal $f\in \F_2$ for a fixed $\g$}
&  ${\emprisk}_{n,\g}^\srm(f_{n,\g}^*) = \inf_{f\in\F_{2}}{\emprisk}_{n,\g}^\srm(f)$
&  (\ref{eq:Rsrm})

\\[3pt]
%\hline
$\g_n^*$, $f_n^*$  
&  \parbox[c][17pt]{150pt}{optimal margin and optimal $f\in \F_2$
%, minimizing $\emprisk_{n,\g}^\srm(f)$
}
&  $\emprisk_{n,*}^\srm = \emprisk_{n,\g_n^*}^\srm(f_n^*)$
&   (\ref{eq:Rsrm})

%\\[3pt]
%%\hline
%$g_n^\srm$  
%&  \parbox[c][17pt]{150pt}{rescaled $f_n^\srm$
%%, minimizing $\emprisk_{n,\g}^\srm(f)$
%}
%&  $g_n^\srm = f_n^\srm / \g_n^* \in \F_{2/\g_n^*}$
%&   
%
\\[3pt]
%\hline
$\loss_{\g,\gap}(f)$  
&  \parbox[c][17pt]{150pt}{surrogate risk} 
& $\E\big[ \sloss_{\g,\gap}(Y f(X) )\big]$
& (\ref{eq:surrisk})

\\[3pt]
%\hline
$\empsrisk_{n,\g,\gap}(f)$  
&  \parbox[c][17pt]{150pt}{empirical surrogate risk} 
& $\oo{n}\sum_{i=1}^n \sloss_{\g,\gap}(Y_i f(X_i))$
& (\ref{eq:surrisk}) 
\\[3pt]
\hline
\end{tabular}
\caption{A summary of the notation.
%for the proof of Theorem \ref{thm:main}.
}
\label{tab:notation}
\end{table*}

%In this section we 
We now
prove the main 
technical
result of the paper:
\begin{theorem}
\label{thm:main} With probability 
one 
over the random sample $S$
 of size $n$,
$$
\lim_{n \to \infty} \risk(f_n^*) = \risk^* .
$$
\end{theorem}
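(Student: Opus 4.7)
My plan is to upper-bound $\risk(f_n^*)$ by a quantity that converges almost surely to $\risk^*$, using the surrogate loss $\sloss_{\g,\gap}$ of Fig.~\ref{fig:surr-loss} as a bridge between the 0/1 risk and the $\g$-margin risk. The pointwise inequality $\chr_{\{u<0\}}\le\sloss_{\g,\gap}(u)\le\chr_{\{u<\g\}}$ for every $u\in\R$ integrates to the sandwich
\begin{equation*}
\risk(f) \;\le\; \loss_{\g,\gap}(f), \qquad \empsrisk_{n,\g,\gap}(f) \;\le\; \emprisk_{n,\g}(f),
\end{equation*}
valid for every $f\in\F_2$ and $\g>0$. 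Unlike the hard margin indicator, $\sloss_{\g,\gap}$ is $1/(\g\gap)$-Lipschitz in its argument, hence amenable to Rademacher/covering-number control over $\F_2$.

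The first pillar is a one-sided uniform deviation bound: on an event $E_n$ with $\sum_n\P(E_n^c)<\infty$,
\begin{equation*}
\loss_{\g,\gap}(f) \;\le\; \empsrisk_{n,\g,\gap}(f) + r^\srm(n,\g) \qquad \text{for every } f\in\F_2,\ \g>0.
\end{equation*}
For fixed $\g$, a standard Rademacher argument applied to the composed class $\sloss_{\g,\gap}\circ\F_2$---using the doubling-dimension covering-number bound on $\F_2$ of \citet{DBLP:conf/colt/GottliebKK10} and absorbing the $1/(\g\gap)$ loss-Lipschitz factor, with $\gap$ a fixed numerical constant in $(0,1)$---produces the first summand of $r^\srm(n,\g)$. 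A peeling argument over the dyadic grid $\g\in\{2^{-k}\}_{k\ge 1}$, which pays only $\sqrt{\log\log(2e/\g)/n}$, produces the second summand; Borel--Cantelli upgrades to ``eventually almost surely.'' Composing with the sandwich and the definitions of $\g_n^*$ and $f_n^*$ gives, eventually a.s.,
\begin{equation*}
\risk(f_n^*) \;\le\; \loss_{\g_n^*,\gap}(f_n^*) \;\le\; \empsrisk_{n,\g_n^*,\gap}(f_n^*)+r^\srm(n,\g_n^*) \;\le\; \emprisk_{n,\g_n^*}(f_n^*)+r^\srm(n,\g_n^*) \;=\; \emprisk_{n,*}^\srm .
\end{equation*}

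The second pillar is a Lipschitz approximation of the Bayes classifier: for every $\tau>0$ one exhibits $\g_\tau>0$ and $f_\tau\in\F_2$ with $\risk_{\g_\tau}(f_\tau)\le\risk^*+\tau$. Since bounded Lipschitz functions are $L^1$-dense in any Borel probability measure on a separable metric space, there is an $L$-Lipschitz $g\colon\X\to[-1,1]$ approximating $x\mapsto 2\,\P(Y=1\gn X=x)-1$ in $L^1$ so closely that $\risk(\sign g)\le\risk^*+\tau/2$. Choose $\g_0>0$ with $\P(|g(X)|<\g_0)\le\tau/2$---possible by continuity of measure after an arbitrarily small perturbation eliminating the nullset $\{g=0\}$---and rescale $f_\tau\defeq(2/L)g\in\F_2$, $\g_\tau\defeq 2\g_0/L$, so that $\risk_{\g_\tau}(f_\tau)\le\risk(\sign f_\tau)+\P(|f_\tau(X)|<\g_\tau)\le\risk^*+\tau$. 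Since $r^\srm(n,\g_\tau)\to 0$ and $\emprisk_{n,\g_\tau}(f_\tau)\ninfas\risk_{\g_\tau}(f_\tau)$ by the SLLN, it follows that $\limsup_n\emprisk_{n,*}^\srm\le\risk^*+\tau$ a.s.; sending $\tau\downarrow 0$ along a countable sequence yields $\limsup_n\emprisk_{n,*}^\srm\le\risk^*$ a.s. Combined with the first pillar and the trivial $\risk(f_n^*)\ge\risk^*$, this concludes the proof.

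The main obstacle is calibrating the first pillar so that $r^\srm(n,\g)$ dominates the deviation uniformly in $f$ and $\g$: the $1/(\g\gap)$ loss-Lipschitz factor together with the doubling-dimension cover must match the explicit $\g^{-1}(c_\dim/n)^{1/2(\dim+1)}$ form in $r^\srm$, and the union bound over the continuum of margins must be arranged---via peeling---so that the cost is only $\log\log(1/\g)$. A coarser $\log(1/\g)$ union bound would preclude taking $\g_\tau$ to be a fixed positive constant in the second pillar while still forcing $r^\srm(n,\g_\tau)\to 0$. The Bayes-approximation step is conceptually clean but quietly requires the marginal of $X$ to be a Radon measure on a separable metric space---a mild regularity implicit in the paper's setup.
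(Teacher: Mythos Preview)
Your proposal follows the same two-term decomposition $\risk(f_n^*)-\risk^*=(\risk(f_n^*)-\emprisk_{n,*}^\srm)+(\emprisk_{n,*}^\srm-\risk^*)$ and the same core machinery as the paper: surrogate-loss Rademacher control plus margin peeling for the first term, and Lipschitz approximation of the Bayes rule for the second. The differences are in technical choices, and yours are legitimate simplifications. You keep the surrogate slope $\gap$ a fixed constant and peel over the dyadic grid $\{2^{-k}\}$, whereas the paper takes $\gap_n=1/n_d\to 0$ and uses the correspondingly finer grid $\g_{n,l}=(1-\gap_n)^{l-1}$; the paper's shrinking $\gap_n$ is needed only to invoke Lemma~\ref{lem:approx} in term (II), which you bypass entirely by applying the SLLN directly at the fixed deterministic pair $(\g_\tau,f_\tau)$. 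Your fixed-$\gap$ Rademacher term decays like $\g^{-1}n^{-1/(\dim+1)}$, which is indeed eventually dominated (uniformly in $\g$) by the first summand $\g^{-1}(c_\dim/n)^{1/(2(\dim+1))}$ of $r^\srm$, so the calibration you flag as the main obstacle goes through. For the Bayes approximation, your $L^1$-density argument and the paper's Urysohn construction both work under the paper's bounded-doubling (hence compact) assumption.

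One expositional point to tighten: your displayed chain applies the uniform deviation inequality directly at the random, non-grid margin $\g_n^*$, but the peeling only delivers the bound at dyadic grid points. To make the chain rigorous you must round $\g_n^*$ to the adjacent grid point $\g_k\le\g_n^*<2\g_k$ and run the sandwich through $\srisk_{\g_k,\gap}$ and $\empsrisk_{n,\g_k,\gap}$ instead (exactly as the paper does with $\g_n^\pm$); since consecutive dyadic points differ by a factor of $2$, the deviation at $\g_k$ is still dominated by $r^\srm(n,\g_n^*)$ for large $n$, and the argument closes.
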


%As the proof is somewhat involved, 
We will 
break it up into high-level steps.
The basic plan is standard:
decompose the excess risk into two terms,
\beqn
\label{eq:decomposeLip}
\nonumber
\risk(f_n^*) - \risk^* 
&=&
\Big( 
\risk(f_n^*)  - \emprisk_{n,*}^\srm
\Big)
 +  
\Big(  
\emprisk_{n,*}^\srm - \risk^*
\Big)
\\&=&
\mbox{(I) \,+\, (II)}
,
\eeqn
and show that each decays to $0$ almost surely.
For convenience, the notation used in the proof is summarized in Table \ref{tab:notation}.
All omitted proofs are given in the Appendix.

\subsection{The term (I)}
\label{subsec:termI}
In order to connect $\emprisk_{n,*}^\srm$ and $\risk(f_n^*)$ 
we first need  a concentration bound.
More specifically, since $\emprisk_{n,*}^\srm$ involves the optimal margin $\g_n^*$ (which is a priori unknown),
we would like to prove 
for each $\g>0$
a deviation estimate on
$$\smabs{\risk_{\g}(f)  - \emprisk_{n,\g}(f)},$$
uniformly over all $f\in\F_2$.
We find it most convenient to do this using Rademacher complexities\footnote{
An alternative, though somewhat messier route, would be to use fat-shattering dimension, as in
\citet{DBLP:conf/colt/GottliebKK10}.
},
but these 
require a loss that is Lipschitz-continuous in $\g$ --- and
$\emprisk_{n,\g}(f)$ is not even continuous (it is lower-semicontinuous in $\g$ for a fixed $f$).
We overcome this technical hurdle by introducing a surrogate loss 
$\sloss_{\g,\gap}$ and corresponding surrogate risk
$\srisk_{\g,\gap}$
as follows.
\paragraph{Surrogate loss.} 
For $0<\g,\gap\leq 1$ define the {\em surrogate loss} function $\sloss_{\g,\gap}(u):\R\to[0, 1]$
\beqn
\label{eq:surloss}
\sloss_{\g,\gap}(u) = 
\left\{
\begin{array}{ll}
	1 & \text{if } u \leq \g(1-\gap),\\
	0 & \text{if } u \geq \g,\\
	{(\g -  u)}/{(\g\gap)} & \text{otherwise},
\end{array}
\right.
\eeqn
illustrated
%depicted 
in Figure \ref{fig:surr-loss},
%For any $\g$ and $f:\X\to[-\g,\g]\in\F_2$, let us define its rescaled version $g_f:\X\to[-1,1]$ by
%\beq
%g_f(x) = f(x)/\g \in \F_{2/\g}.
%\eeq
and its associated expected and empirical surrogate risks,
\beqn
\label{eq:surrisk}
\begin{array}{ccl}
\srisk_{\g,\gap}(f) 
& = & {\ds \E\big[ \sloss_{\g,\gap}(Y f(X) )\big] }
,\\[6pt]
%\quad
\empsrisk_{n,\g,\gap}(f) 
& = & {\ds \oo{n}\sum_{i=1}^n \sloss_{\g,\gap}(Y_i f(X_i))}
.
\end{array}
\eeqn
At this point, it appears as though we have two free parameters: $\g$ and $\gap$.
However, we will tie them together via 
%the relation $L=2/\g$ and
a common (double) stratification scheme.
For $n,l\in\N$ 
put
%\footnote{
%Note that
%the grid 
%(\ref{eq:grid-L})
%is constructed such that
%\(
%{\g_{n,l}}  =  \prod_{j=1}^{l-1} ({1 - 1/({j^{\oo{n}} \nd})})
%  \approx  
%%\exp\paren{-\oo{ \nd } \sum_{j=1}^{l-1} \oo{j^{1/{n}}}} \\
%% &\approx  &
%\exp\big({-{(l-1)^{1-\oo{n}} }/{\nd( 1 - \oo{n} )} }\big)
%\)
%and thus
%\(
%\gap_{n,l} \!\approx\! ({\nd \log^{\oo{n}}\paren{ 1/{\g_{n,l}}}})^{-1} 
%\)
%which implies the form of the penalty term $r^\srm( n, \g )$ in (\ref{eq:rsrmdef}).
%}
\begin{align}
%\label{eq:grid-eta}
%\\[8pt]
%[5pt]
\label{eq:grid-L}
{\g_{n,l}} & =  \paren{1-\gap_{n}}^{l-1}
,\qquad
\gap_{n}  =  {1}/{\nd}
%{\g_{n,l+1}} & =  \g_{n,l} \paren{1-\gap_{n}}
%\quad \text{with } \quad 
%{\g_{n,1}} = 1
\\[5pt]
\label{eq:grid-eps}
\eps_{n,l}
& = 
\frac{2}{\g_{n,l} \gap_{n} \nd^2}%
 \cc+\cc
 \sqrt{\cc\frac{{2 c_1}{}\log\paren{\oo{\gap_n} \log \frac{e}{\g_{n,l}}}  }{n}}\cc,
\end{align}
where
\beqn
\nd = \paren{\frac{n}{c_\dim}}^{\frac{1}{2(\dim+1)}}.
\eeqn

This enables us to obtain a uniform deviation estimate:
\begin{lemma}
\label{lem:stratifyLip}
For all 
$n\in\N$ and
$\eps > 0$,
\beq
\P\Big( \exists l\in\N:
\cc\cc\sup_{f \in \F_{2}} 
		\Big|
				\srisk_{\g_{n,l},\gap_{n}}(f) 
				- \empsrisk_{n,\g_{n,l},\gap_{n}}(f)
				\Big|
		> \eps  +  \eps_{n,l}
\Big)\,\,\\
\leq
 \frac{\pi^2}{6} \exp\paren{-\frac{n\eps^2}{c_1}}.			
\eeq
\end{lemma}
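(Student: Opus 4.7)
My plan is to first get a uniform deviation bound at each fixed pair $(\g,\gap)=(\g_{n,l},\gap_n)$, then stratify via a weighted union bound over $l\in\N$. Fix $n,l$ and write $\g=\g_{n,l}$, $\gap=\gap_n$ for brevity. The surrogate loss $\sloss_{\g,\gap}$ takes values in $[0,1]$ and, from its piecewise-linear definition, is $1/(\g\gap)$-Lipschitz on $\R$. Because $\sloss_{\g,\gap}\circ(Y\cdot f)$ is bounded in $[0,1]$, the standard symmetrization-plus-McDiarmid argument gives, with probability at least $1-\delta$,
\[
\sup_{f\in\F_2}\bigl|\srisk_{\g,\gap}(f)-\empsrisk_{n,\g,\gap}(f)\bigr|\;\le\;2\,\E\radem_n(\sloss_{\g,\gap}\!\circ\! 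Y\F_2)+\sqrt{\tfrac{c_1\log(1/\delta)}{2n}},
\]
for a suitable absolute constant $c_1$ absorbing the $[0,1]$-range concentration factor.

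Next I would bound the Rademacher term. By Talagrand's contraction principle applied with Lipschitz constant $1/(\g\gap)$,
\[
\E\radem_n(\sloss_{\g,\gap}\circ Y\F_2)\;\le\;\tfrac{1}{\g\gap}\,\E\radem_n(\F_2).
\]
To estimate $\E\radem_n(\F_2)$, I would use that $(\X,\dist)$ has unit diameter and doubling dimension $\dim$, so the $L_\infty$ $\eps$-covering number of $\F_2$ restricted to any $n$ sample points is at most $(Cc_\dim/\eps)^{c\dim}$ (this is exactly the kind of covering/fat-shattering estimate from GKK). A standard chaining (or a one-scale discretization with the scale chosen to balance the bias and the combinatorial term) yields
\[
\E\radem_n(\F_2)\;\le\;(c_\dim/n)^{1/(\dim+1)}\;=\;1/\nd^{2}.
\]
Combining the last two displays gives an $(f$-uniform$)$ bound of $2/(\g_{n,l}\gap_n\nd^{2})+\sqrt{c_1\log(1/\delta)/(2n)}$, which already matches the first summand of $\eps_{n,l}$.

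For the stratification, I would apply the per-$l$ bound with confidence parameter $\delta_l=l^{-2}\exp(-n\eps^{2}/c_1)$, so that
\[
\sum_{l\ge 1}\delta_l\;=\;\tfrac{\pi^{2}}{6}\exp(-n\eps^{2}/c_1),
\]
yielding the claimed right-hand side by the union bound. It remains to check that $\sqrt{c_1\log(1/\delta_l)/(2n)}\le \eps+\sqrt{2c_1\log(\gap_n^{-1}\log(e/\g_{n,l}))/n}$. Since $\log(1/\delta_l)=n\eps^2/c_1+2\log l$, subadditivity of $\sqrt{\cdot}$ gives one term equal to $\eps$ and another proportional to $\sqrt{\log l/n}$; from the geometric grid $\g_{n,l}=(1-\gap_n)^{l-1}$ and $\log((1-\gap_n)^{-1})\ge\gap_n$ I have $l\le 1+\gap_n^{-1}\log(1/\g_{n,l})$, and hence $\log l\lesssim \log(\gap_n^{-1}\log(e/\g_{n,l}))$, matching the second summand of $\eps_{n,l}$ up to the constant packaged into $c_1$.

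\textbf{Main obstacle.} The routine step is the symmetrization/concentration; the substantive technical step is the covering-based bound $\E\radem_n(\F_2)\le (c_\dim/n)^{1/(\dim+1)}$, which requires the doubling-dimension covering estimate for $\F_2$ and a careful scale balance so that the exponent $1/(\dim+1)$ and the constant $c_\dim$ come out exactly as in the statement of $\eps_{n,l}$.
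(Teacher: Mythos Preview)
Your proposal is correct and follows essentially the same route as the paper: a per-$l$ Rademacher/McDiarmid bound with Talagrand contraction (Lipschitz constant $1/(\g\gap)$), the doubling-dimension covering estimate giving $\radem_n(\F_2)\lesssim (c_\dim/n)^{1/(\dim+1)}=1/\nd^2$, and a union bound over $l$ with weights $1/l^2$ summing to $\pi^2/6$, using the grid relation $\g_{n,l}\le e^{-\gap_n(l-1)}$ to convert $\log l$ into $\log(\gap_n^{-1}\log(e/\g_{n,l}))$. The only cosmetic difference is that the paper applies the concentration bound at threshold $\eps+r_{n,l}$ and then uses $(\eps+r_{n,l})^2\ge\eps^2+r_{n,l}^2$, whereas you fix $\delta_l$ first and use $\sqrt{a+b}\le\sqrt a+\sqrt b$; these are the same inequality read in opposite directions.
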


Armed with this uniform deviation bound,
%Establishing the large deviation bound in Lemma \ref{lem:stratifyLip} 
we proceed with the proof that the term (I) decays to zero almost surely.
By Theorem \ref{thm:nn-lip} we may 
%restrict
%\footnote{In this section we use the symbol $f$ to denote a function in $\F_2$ and $g$ for a function in $\F_L$ with arbitrary $L$.}
assume that $f_n^*\in\F_2$ is 
in the form of (\ref{eq:lip-ext})
%belong to $\F_2$.
 %
with $\g = \g_n^*$
being the optimal margin.
Given $\g_n^*$,
let $l_n^-,l_n^+ \in \N$ be the \emph{consecutive} 
margin indexes in the stratification grid (\ref{eq:grid-L})
such that
\beq
\forall n\in\N
,\quad
\g_n^* \in [\g_{n,l_n^-},\g_{n,l_n^+})
,\qquad
l_n^-  =  l_n^+ + 1 
\eeq
and abbreviate
$
\g_n^+ = \g_{n,l_n^{+}}
$
and
$
\g_n^- = \g_{n,l_n^{-}}.
$
%Rescaling $f_n^\srm$,
%Setting
%Defining 
%Scale  by $\g_n^*$,
%In order to relate $\emprisk_{{n,\g_n^*}}(f_{n}^\srm)$ to the surrogate loss $\empsrisk_{\gap_n}$
%\,Rescaling
%\beq
%g_n^*  \,=\,   (1/\g_n^*) f_n^*  \,\in\, \F_{2/\g_n^*} \,\subset\, \F_{2/\g_n^-},
%\eeq
%one has
%\(
%\risk(f_{n}^*) = \risk(g_{n}^*) 
%\) 
%and
%\(
%\emprisk_{{n,\g_n^*}}(f_{n}^*) = \emprisk_{{n,1}}(g_{n}^*).
%\)
%Thus
%\newcommand{\etp}{\gap_{n,l_n^+}}
%\newcommand{\etm}{\gap_{n,l_n^-}}
%Rescaling $f_n^*$ to $g_n^*$ enables us to 
We now
relate the margin risks to the surrogate risks.
%Indeed, since $0 \leq \gap_{n} \leq 1$ for all $n\in\N$,
%we have that
Note that since $0 \leq \g_n^- \leq \g_n^* \leq \g_n^+ $, we have
\beq
\risk(f_n^* )
& \leq & 
\srisk_{\g_n^-, \gap_n}(f_n^* ),
\\
\emprisk_{n,\g_n^*}(f_n^* ) 
& \geq &
\empsrisk_{n,\g_n^-,\gap_n}(f_n^* ),
\\
r_\srm(n,\g_n^*) 
&\geq&
 r_\srm(n,\g_n^+).
\eeq
%Thus $\pstar$ is bounded above by
%\beq
%%\pstar
%%& \leq &
%\P\Big(
%{\srisk_{\etp}(g_{n}^\srm)  
%							- 
%							 \empsrisk_{\etm}(g_{n}^\srm)} 	
%%\\&& \qquad \qquad\hfill
%							> \eps  + r_\srm(n,\g_n^{+})
%\Big).
%\eeq
%%Note that
%%\beq
%% &  &\abs{\srisk_{\gap_{n,l_n^+}}(g_n^\srm)  
%% - 
%%\srisk_{\gap_{n,l_n^-}}(g_n^\srm)} 		
%%\\	&	 \leq &
%%		\sup_{u\in [-1,1]}
%%		 \abs{
%%				\sloss_{\gap_{n,l_n^+}}(u) - \sloss_{\gap_{n,l_n^-}}(u)
%%			}\\
%%		&	= &
%%		{\frac{\etp - \etm}{\etp}} 
%%\ninf 0,
%%\eeq
%%where the limit holds by the definition of $\gap_{n,l}$.
%%Taking $n$ sufficiently large, we have
%Since $g_n^*\in\F_{2/\g_n^-}$, it follows that
Thus,
\beq
\pstar	& := &
		\P\Big(
					\risk(f_{n}^*) 
						-  \emprisk_{n,*}^\srm
 				> \eps 					 					
		\Big)
%\nonumber
\\
& = &
\P\Big(
{\risk(f_{n}^*)  
							- 
							 \emprisk_{n,\g_n^{*}}(f_{n}^*)} 	-   r_\srm(n,\g_n^{*})
							> \eps  
\Big)
%\nonumber
%\\
%\nonumber
%%\label{eq:rescale-step}
%& = &
%\P\Big(
%{\risk(g_{n}^*)  
%							- 
%							 \emprisk_{{n,1}}(g_{n}^*)} 
%							> \eps  + r_\srm(n,\g_n^{*})	
%\Big)
.
%\eeq
%\beq
%\pstar 
\\
& \leq &
\P\Big( %
{\srisk_{\g_n^-,\gap_n}(f_{n}^*)  
							- 
							 \empsrisk_{n,\g_n^-, \gap_n}(f_{n}^*)} 
							\\&& \hspace{3pt} > \eps  + r_\srm(n,\g_n^{+})	
\Big),
%\\
%& \leq &
%\P\Big( %
%\abs{\srisk_{\etm}(g_{n}^\srm)  
%							- 
%							 \empsrisk_{n,\etm}(g_{n}^\srm)}
%							\\&&  \hspace{3pt} > \eps  + r_\srm(n,\g_n^{+})	
%\Big)
\eeq
and since $f_n^*\in\F_2$, we have
\beq
\pstar 
& \leq &
\P\Big( 
			\sup_{f\in \F_{2}}
\abs{\srisk_{\g_n^-,\gap_{n}}(f)  
							- 
							 \empsrisk_{n,\g_n^-,\gap_{n}}(f)} 
							\\&&  \hspace{3pt} > \eps  +  r_\srm(n,\g_{n}^+)	
\Big)
\\
& \leq &
\P\Big( \exists l\geq 2 :
			\sup_{f\in \F_{2}}
\abs{\srisk_{\g_{n,l},\gap_{n}}(f)  
							- 
							 \empsrisk_{n,\g_{n,l},\gap_{n}}(f)} 
							\\&&  \hspace{3pt} > \eps  +  r_\srm(n,\g_{n,l-1})
\Big)
.
\eeq
Next, we make
a connection
 between $r_\srm(n,\g_{n,l-1})$ and $\eps_{n,l}$,
justifying
the form of the penalty term in
(\ref{eq:rsrmdef}):
\begin{lemma}
\label{lem:rsrm-to-eps}
%By construction, 
For all $l\geq2$
and all $n$ sufficiently large,
$$
r_\srm(n,\g_{n,l-1}) \geq \eps_{n,l} .
$$
\end{lemma}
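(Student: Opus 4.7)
The plan is to compare $r_\srm(n,\g_{n,l-1})$ and $\eps_{n,l}$ term-by-term. Each is the sum of a ``main'' term proportional to $1/\g$ and a ``deviation'' term under a square root; I will verify that each summand on the left dominates the corresponding summand on the right once $n$ is large enough that $\gap_n = 1/\nd \leq 1/2$.

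First I would unwind the definitions. By (\ref{eq:grid-L}), $\g_{n,l-1} = \g_{n,l}/(1-\gap_n)$, and by construction $(c_\dim/n)^{1/(2(\dim+1))} = 1/\nd = \gap_n$. Thus the main term of $r_\srm(n,\g_{n,l-1})$ is
\[
\frac{4}{\g_{n,l-1}}\paren{\frac{c_\dim}{n}}^{\frac{1}{2(\dim+1)}}
= \frac{4(1-\gap_n)}{\g_{n,l}\,\nd},
\]
while the main term of $\eps_{n,l}$ equals
\[
\frac{2}{\g_{n,l}\gap_n \nd^2} = \frac{2}{\g_{n,l}\nd}.
\]
For $n$ large enough that $\gap_n \leq 1/2$, the factor $4(1-\gap_n) \geq 2$, which delivers the first inequality.

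Next I would handle the deviation terms. Using $\log \nd = \tfrac{1}{2(\dim+1)}\log(n/c_\dim)$, the square-root term of $\eps_{n,l}$ can be rewritten as
\[
\sqrt{\frac{2c_1\log\bigl(\tfrac{1}{\gap_n}\log\tfrac{e}{\g_{n,l}}\bigr)}{n}}
= \sqrt{\frac{\tfrac{c_1}{\dim+1}\log(n/c_\dim) + 2c_1\log\log\tfrac{e}{\g_{n,l}}}{n}}.
\]
This matches, term for term, the square-root expression in $r_\srm(n,\g_{n,l-1})$ except for $\log\log\tfrac{2e}{\g_{n,l-1}}$ versus $\log\log\tfrac{e}{\g_{n,l}}$. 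The former dominates iff $2\g_{n,l}/\g_{n,l-1} \geq 1$, i.e., $2(1-\gap_n) \geq 1$, which again holds as soon as $\gap_n \leq 1/2$.

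Summing the two inequalities yields $r_\srm(n,\g_{n,l-1}) \geq \eps_{n,l}$ uniformly in $l \geq 2$, for all $n$ large enough that $\nd \geq 2$. I don't anticipate a serious obstacle; the only subtlety is to notice that the definition of $\eps_{n,l}$ has been reverse-engineered so that the exponent $1/(2(\dim+1))$ in $\nd$ converts $\log\nd$ into exactly the $\tfrac{c_1}{\dim+1}\log(n/c_\dim)$ appearing in $r_\srm$, and that the stratification step size $(1-\gap_n)$ is mild enough to absorb constant factors between the two main terms.
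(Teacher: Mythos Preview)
Your proof is correct and follows essentially the same route as the paper: both arguments compare the two expressions term-by-term, using $\gap_n\le 1/2$ (equivalently $1-\gap_n\ge 1/2$) to absorb the factor between $\g_{n,l}$ and $\g_{n,l-1}$, and the identity $\log\nd=\tfrac{1}{2(\dim+1)}\log(n/c_\dim)$ to match the square-root terms. The only cosmetic difference is that the paper first rewrites $\eps_{n,l}$ in terms of $\g_{n,l-1}$ and then bounds it above by $r_\srm(n,\g_{n,l-1})$, whereas you rewrite $r_\srm(n,\g_{n,l-1})$ in terms of $\g_{n,l}$; the two are equivalent.
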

An application of Lemma \ref{lem:rsrm-to-eps} yields
\beq
%(*)
\pstar
& \leq &
\P\Big( \exists l\geq2 :\cc\cc
			\sup_{f\in \F_{2}}
\abs{\srisk_{\g_{n,l},\gap_{n}}(f)  
							- 
							 \empsrisk_{n,\g_{n,l},\gap_{n}}(f)} 
							\\&&  \hspace{3pt} > \eps  +  \eps_{n,l}
\Big)
\\
&  \leq & \frac{\pi^2}{6}\exp\paren{-\frac{n\eps^2}{c_1}},
\eeq
where the last inequality follows from Lemma \ref{lem:stratifyLip}.

\begin{figure}[t]
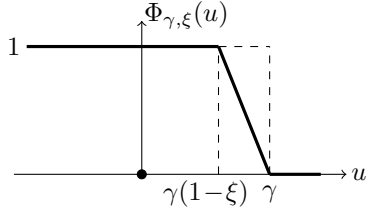

\centering
\tikz[scale = 1.7]{
%\fill[fill=blue!30] (-1,1) rectangle (1,-1);
\draw [->] (-1,0) -> (1.6,0);
\draw [->] (0, 0) -> (0, 1.2);
\draw [very thick] (-.9,1) -> (.6,1);
\draw [very thick] (.6,1) -> (1,0);
\draw [very thick] (1,0) -> (1.4,0);
\draw [dashed] (.6,0) -> (.6,1);
\draw [dashed] (1,0) -> (1,1);
\draw [dashed] (.6,1) -> (1,1);
\draw [fill] (0,0) circle   (1pt);
\node at (1,-.15) {$\g$};
\node at (-1,1) {$1$};
%\node at (0,-.1) {$0$};
\node at (.5, -.15) {$\g(1\cc-\cc\gap)$};
\node at (1.7,0) {$u$};
\node at (.35,1.25) {$\sloss_{\g,\gap}(u) $};
}
\caption{The surrogate loss function.}
\label{fig:surr-loss}
\end{figure}

\subsection{The term (II)}
We begin by approximating the Bayes optimal risk by the margin risk:
\begin{lemma}
\label{lem:denseLip}
For every $\eps>0$ there is a $\g>0$ such that
\beq
\inf_{f\in \F_2}\risk_{{\g}}(f) - \risk^* < \eps.
\eeq
In particular,
\beqn
\label{eq:denseLip}
\risk^*=\lim_{\g\to0}\inf_{f\in \F_2}\risk_{{\g}}(f).
\eeqn
\end{lemma}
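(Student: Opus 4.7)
The plan is to approximate the Bayes-optimal classifier $h\bay$ by an explicit $2$-Lipschitz function on a large compact subset of $\X$, exploiting Lusin's theorem. First I would apply Lusin's theorem to the Borel $\pm 1$-valued function $h\bay$ under the marginal distribution of $X$: for any $\delta>0$, there is a compact $K\subset\X$ with $\P(X\notin K)<\delta$ such that $h\bay|_K$ is continuous. Since $h\bay$ takes only two values, continuity forces $K^+=K\cap\{h\bay=+1\}$ and $K^-=K\cap\{h\bay=-1\}$ to be disjoint compact subsets of $K$. Either one of them is empty (in which case a constant $f\equiv\pm 1\in\F_2$ with any $\g\in(0,1)$ already witnesses the bound) or both are nonempty, yielding $d_0:=\dist(K^+,K^-)>0$ by compactness.

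Setting $a:=\min(1,d_0)$, I would take $f$ to be the truncation to $[-a,a]$ of the McShane--Whitney extension of the function equal to $+a$ on $K^+$ and $-a$ on $K^-$; by construction $f$ is $(2a/d_0)$-Lipschitz with $2a/d_0\le 2$, takes values in $[-1,1]$, and therefore lies in $\F_2$. With margin $\g:=a/2$, I would bound $\risk_\g(f)$ by conditioning on $X$: on $K^+$ the event $\{Yf(X)<\g\}$ is equivalent to $\{Y=-1\}$ (conditional probability $1-\eta(X)$), and on $K^-$ it is equivalent to $\{Y=+1\}$ (conditional probability $\eta(X)$). Since $\eta\ge 1/2$ on $K^+$ and $\eta\le 1/2$ on $K^-$, both integrands coincide with $\min(\eta,1-\eta)$, so the contribution from $X\in K$ sums to at most $\risk^*$, while the contribution from $X\notin K$ is bounded by $\P(X\notin K)<\delta$. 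Choosing $\delta=\eps$ yields $\risk_\g(f)\le\risk^*+\eps$, which proves the main assertion.

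For the limit equation (\ref{eq:denseLip}), I would combine this upper bound with the matching lower bound $\inf_{f\in\F_2}\risk_\g(f)\ge\risk^*$, valid for every $\g>0$: given $f\in\F_2$, define $\tilde h(x):=\sign(f(x))$ on $\{|f|\ge\g\}$ and arbitrarily on $\{|f|<\g\}$; then $\P(\tilde h(X)\ne Y)\le\risk_\g(f)$, since on the exceptional set $\{|f|<\g\}$ the defining event $\{Yf<\g\}$ holds surely and hence majorizes the error indicator. Since $\g\mapsto\inf_{f\in\F_2}\risk_\g(f)$ is monotone in $\g$, these two bounds sandwich its $\g\to 0$ limit at $\risk^*$. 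The main obstacle is honoring the Lipschitz budget of $\F_2$: the McShane extension's Lipschitz constant is $2a/d_0$, which only falls below $2$ after capping $a$ at $d_0$, and this in turn forces the margin $\g=a/2$ to be tied to the geometric separation $d_0$ delivered by Lusin's theorem rather than chosen a priori.
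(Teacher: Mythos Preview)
Your argument is correct and takes a genuinely different route from the paper's own proof. The paper proceeds by stratifying $\X$ into level sets $A_n=\{\eta\ge 1/2+1/n\}$, $B_n=\{\eta\le 1/2-1/n\}$ of the regression function, invokes compactness of $\X$ and Urysohn's lemma to produce continuous functions separating $A_n$ from $B_n$, then uniformly approximates these by Lipschitz functions and finishes with two applications of dominated convergence. You instead apply Lusin's theorem to $h\bay$ to obtain a compact $K$ on which $h\bay$ is continuous (hence locally constant), and then build the witness $f\in\F_2$ in one stroke via the McShane--Whitney extension from $K^+\cup K^-$; this yields the quantitative bound $\risk_\g(f)\le \risk^*+\delta$ directly, with no limiting argument. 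Your approach is more explicit and also supplies the easy lower bound $\inf_{f\in\F_2}\risk_\g(f)\ge\risk^*$ needed for (\ref{eq:denseLip}), which the paper leaves implicit. One point worth noting: your use of Lusin's theorem to produce a \emph{compact} $K$ relies on tightness of the marginal of $X$, which the paper's standing assumptions (bounded diameter, finite doubling dimension, and implicitly completeness) guarantee, so you are operating under the same hypotheses as the paper.
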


Since (\ref{eq:denseLip}) holds for any sequence $\g_n\ninf0$,
it is true in particular of subsequences of
the
stratification grid
(\ref{eq:grid-L}).
Hence, for all $\eps>0$, there is a
$\tilde \g^+$ %in (\ref{eq:grid-L}) 
with a corresponding $\tilde{f}^+\in\F_2$ such that
\beq
\inf_{f\in \F_2}\risk_{\tilde{\g}^+}(f) & \leq &  \risk^* +  \eps/8,
\\
\risk_{\tilde{\g}^+}(\tilde{f}^+) & \leq & \inf_{f\in \F_2}\risk_{\tilde{\g}^+}(f) + \eps/8
.
\eeq
Fix such a $\tilde \g^+$ and let $\tilde \g^-$ be the 
``next''
%following 
margin in the stratification (\ref{eq:grid-L}).
%(so that $\tilde \g^- < \tilde \g^+$ are consecutive) 
%with corresponding indices $\tilde l^+$ and $\tilde l^-$. 
%Thus, it suffices to show that 
%\beqn
%\label{eq:gtil}
%\limsup_{n\to\infty} 
%\emprisk_{n}^\srm
%\;\le\;
%\risk_{\tilde{\g}^+}(\tilde f^+)
%\eeqn
%almost surely.
Now by (\ref{eq:Rsrm}),
Algorithm~\ref{alg:inner} provides an optimal $f_n^*$ such that
\beq
\emprisk_{n,*}^\srm
\;=\; \emprisk_{n,\g_n^*}^\srm(f_n^*) 
\;\leq\; \emprisk_{n,\tilde{\g}^-}^\srm(f_{n,\tilde{\g}^-}^*)
\;\leq\; \emprisk_{n,\tilde{\g}^-}^\srm(\tilde{f}^+).
\eeq
%where $f_{n,\tilde{\g}^-}^\srm$ is the function minimizing $\emprisk_{n,\tilde{\g}^-}^\srm(f)$ over $f\in\F_2$.
Hence,
for the term (II) we have
\beq
\psstar    & := & 	\P\Big(
 		\emprisk_{n,*}^\srm
		- \risk^*
		%\inf_{f\in \F_2}\risk_{\tilde{\g}^+}(f)
		> \eps
	 \Big)\\
& \leq &
 	\P\Big(
 		\emprisk_{n,*}^\srm
		- \risk_{\tilde{\g}^+}(\tilde f^+)
		> 3\eps/4
	 \Big)
\\
& \leq &
 	\P\Big(
 		\emprisk_{n,\tilde{\g}^-}(\tilde{f}^+)
		- \risk_{\tilde{\g}^+}(\tilde f^+)
		\\
		& & \hspace{27pt} > 3\eps/4 - r^\srm(n,\tilde \g^-)
	 \Big)
.
\eeq
%and rescaling,
%\[
%\tilde g^+ = (1/\tilde \g^+) \tilde f^+ \in \F_{2/\tilde \g^+}.
%\]
%Thus, $\emprisk_{n,\tilde{\g}^-}(\tilde{f}^+) \cc=\cc \emprisk_{n,\tilde{\g}^- / \tilde{\g}^+}(\tilde{g}^+)$,
%which implies
%$\risk_{\tilde{\g}^+}(\tilde f^+) \cc=\cc \risk_{1}(\tilde g^+)$
%\beq
%\psstar & \leq &
% 	\P\Big(
% 		\emprisk_{n,\tilde{\g}^- / \tilde{\g}^+}(\tilde{g}^+)
%		- \risk_{1}(\tilde g^+)
%		> \eps/2 
%	 \Big) 
%	 \,=:\, \pssstar.
%	%
%\eeq
Next, note that
%for $\g$ normalized to be $1$,
the margin loss
$\risk_{\tilde{\g}^+ }(\cdot)$ is well-approximated by surrogate losses:
\begin{lemma} For every $\g>0$ and $f\in\F_2$
\label{lem:approx}
\beqn 
\label{eq:lim-eta-cutoff}
\lim_{n\to\infty} \abs{\srisk_{\g,\gap_{n}}(f) - \risk_{\g}(f)} & = & 0.
\eeqn
\end{lemma}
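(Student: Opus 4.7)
The plan is to sandwich the surrogate loss between two $0$-$1$ indicators and then let $\gap_n\to 0$. Directly from the definition of $\sloss_{\g,\gap}$, one has, for every $u\in\R$, the pointwise bounds
\[
\chr_{\{u \leq \g(1-\gap_n)\}} \;\leq\; \sloss_{\g,\gap_n}(u) \;\leq\; \chr_{\{u < \g\}}.
\]
Substituting $u = Yf(X)$ and taking expectations gives
\[
\P\!\paren{Yf(X) \leq \g(1-\gap_n)} \;\leq\; \srisk_{\g,\gap_n}(f) \;\leq\; \risk_\g(f).
\]

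Since $\gap_n = 1/\nd \to 0$ as $n\to\infty$, the thresholds $\g(1-\gap_n)$ increase to $\g$, so the events $\{Yf(X) \leq \g(1-\gap_n)\}$ form a monotone increasing family whose union is exactly $\{Yf(X) < \g\}$. Continuity from below of probability measure then yields
\[
\P\!\paren{Yf(X) \leq \g(1-\gap_n)} \;\longrightarrow\; \P\!\paren{Yf(X) < \g} \;=\; \risk_\g(f),
\]
and the squeeze theorem finishes the job. Equivalently, a three-case analysis ($u>\g$, $u=\g$, $u<\g$) shows that $\sloss_{\g,\gap_n}(u)\to \chr_{\{u<\g\}}$ pointwise on $\R$, and since $0\le\sloss_{\g,\gap_n}\le 1$, the dominated convergence theorem applied to $Yf(X)$ gives the same conclusion in one shot.

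There is no real obstacle here; the lemma is essentially a continuity-of-measure statement. The one point that warrants a moment's care is the behavior at $u=\g$: because $\sloss_{\g,\gap_n}(\g)=0$, the correct limiting indicator is the \emph{strict} inequality $\chr_{\{u<\g\}}$ rather than $\chr_{\{u\le\g\}}$, which is exactly what matches $\risk_\g(f)=\P(Yf(X)<\g)$ and makes the sandwich close without any assumption on the law of $Yf(X)$ at $\g$.
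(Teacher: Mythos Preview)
Your argument is correct and is essentially the paper's own approach: the paper also observes that $\sloss_{\g,\gap_n}(u)\to\chr_{\{u<\g\}}$ pointwise and invokes dominated convergence (after an unnecessary rescaling $g=2f/\g$). Your sandwich/continuity-from-below variant is a slightly cleaner way to package the same idea, and your explicit remark about the boundary point $u=\g$ is a detail the paper glosses over.
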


%By Lemma \ref{lem:approx},
Hence,
we may take $n$ sufficiently large so that
\beq
\abs{ \srisk_{\tilde{\g}^+,\gap_{n}}(\tilde f^+)- \risk_{\tilde{\g}^+}(\tilde f^+)} & \leq & \eps/4.
\eeq
Since by construction,
\[
\frac{\g_{n,l+1}}{\g_{n,l}} = 1 -\gap_{n}
,\qquad
\forall l\in\N,
\]
%we also have
it follows that $\tilde\g^- = \tilde\g^+(1-\gap_n)$ and thus
$$
\emprisk_{n,\tilde{\g}^- }(\tilde{f}^+) 
\leq
\empsrisk_{n, \tilde{\g}^+,\gap_{n}}(\tilde f^+).
$$
Taking $n$ sufficiently large to ensure
$r^\srm(n,\tilde \g^-) \leq \eps/4$ 
and
combining these estimates yields
\begin{align*}
\psstar
	 & \,\leq\,  
	 \P\Big(
 		\empsrisk_{n,\tilde{\g}^+,\gap_{n}}(\tilde{f}^+)
		- \srisk_{\tilde{\g}^+,\gap_{n}} (\tilde{f}^+)
		> \eps/4 
	 \Big)
	 \\
	 & \,\leq\, 
	 \P\Big(
 		\sup_{f\in \F_{2}}
 		\abs{\empsrisk_{n,\tilde{\g}^+,\gap_{n}}({f})
		- \srisk_{\tilde{\g}^+,\gap_{n}} ({f})
		}
		> \eps/4 
	 \Big)
\\& \,\leq\, c e^{-\frac{n\eps^2}{16 c_1}},
\end{align*}
analogously to the bound on term (I).

\section{Experiments}
\label{sec:exp}
\begin{figure}
\begin{center}
\includegraphics[width=.6\columnwidth]{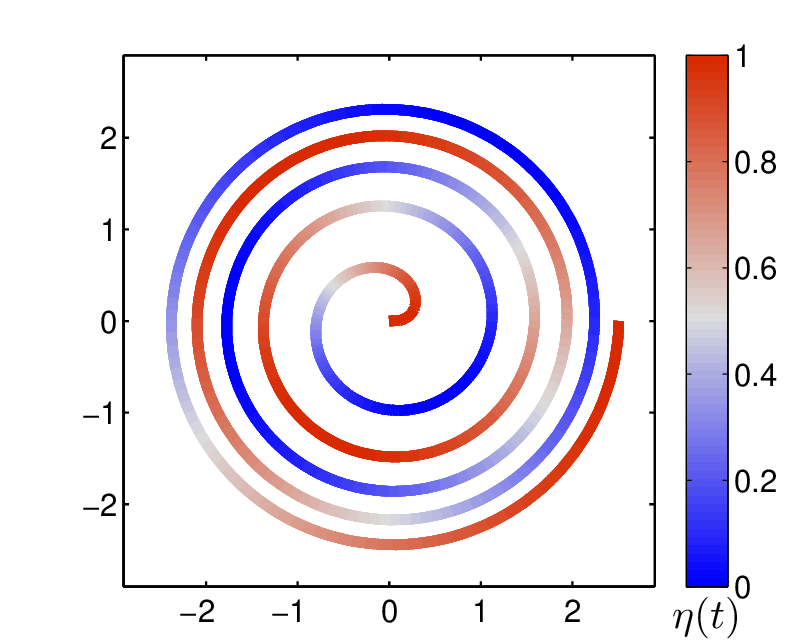}%
\caption{The distribution for $A = 5$ and $\omega = 3$.}
\label{fig:spiral}%
\end{center}
\end{figure}
We ran simulations with a twofold purpose: 
(a) to ascertain the convergence of various classifier risks to the Bayes optimal risk and to
compare their rates of convergence and (b) to compare the actual runtimes of the various algorithms.
To this end, we took 
$\X = \R^2$
endowed with the Euclidean metric
$\dist(x,x') = \norm{x-x'}_2$, and defined 
a joint distribution
over $\X \times\{-1,1\}$
as follows.
A point $(x_1,x_2)\in\R^2$ is sampled by drawing
$T\in[0,2\pi]$ 
uniformly at random and putting
\beq
x_1(T) &=& A \sqrt{T} \cos( \omega T),
\\
%\quad
x_2(T)  &=& A \sqrt{T} \sin( \omega T)
\eeq
for some specified parameters $A$ and $\omega$.
The label $Y\in\{-1,1\}$ is drawn 
according to the conditional distribution
\beq
\eta(T) = \P( Y =1 \gn T ) = \frac{1 + \cos( \omega T )}{2},
\eeq
as illustrated in Figure~\ref{fig:spiral}.

We compared four classifiers:
$k^*$-NN (the $k$-NN classifier with $k$ optimized by cross-validation),
SVM (support vector machine with the RBF kernel whose bandwidth and regularization
penalty were optimized by cross-validation),
CV-1-NN (margin-regularized 1-NN with $\g$ tuned by cross-validation),
and SRM-1-NN (the 1-NN classifier described in Section~\ref{sec:learn-alg} using a greedy
vertex cover heuristic rather than the exact matching algorithm while searching for the optimal margin).
Their runtime and generalization performance, averaged over $100$ independent runs, are summarized in 
%Figure~\ref{fig:plots}.
Figures \ref{fig:ErrVsNumSamp} and \ref{fig:RunVsNumSamp}.

Our proposed algorithm, SRM-1-NN, emerges competitive by both criteria.

\begin{figure}%[H]
\centerline{\includegraphics[width=.5\columnwidth]{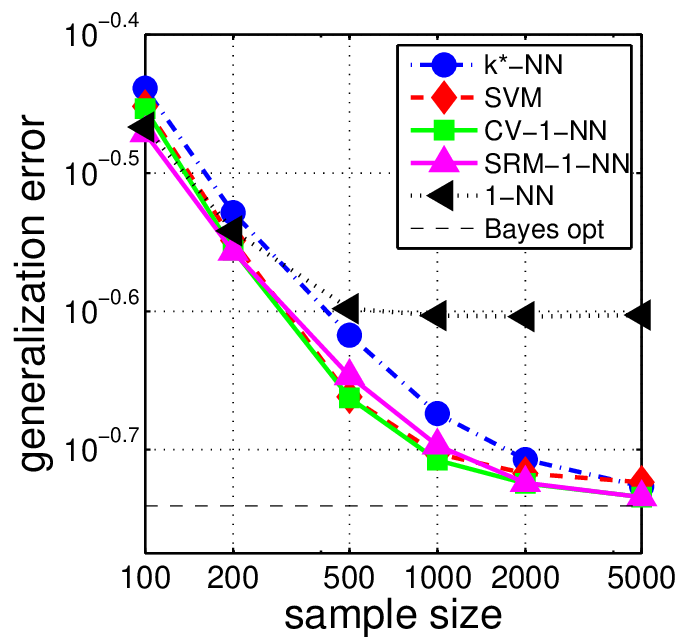}}
\caption{Generalization error vs. number of samples.
CV-1-NN is uniformly dominant, but for large sample sizes
SRM-1-NN catches up.
Unregularized
$1$-NN is included for reference; it is clearly not Bayes consistent.
}
\label{fig:ErrVsNumSamp}%
\end{figure}

\begin{figure}%[H]
\centerline{\includegraphics[width=.4\columnwidth]{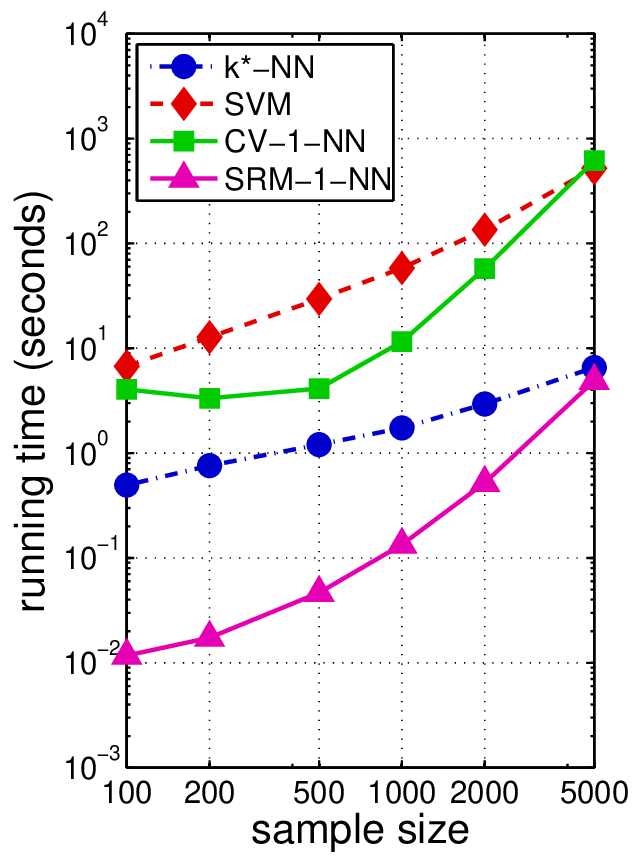}}
\caption{Running time 
vs. number of samples.
SRM-1-NN enjoys a clear time advantage over the other methods involving cross-validation.
}
\label{fig:RunVsNumSamp}%
\end{figure}

%\input{locbib.bbl}
%\newpage
%{
%\small{%footnotesize
%\bibliographystyle{plainnat}
%\bibliography{../../../../mybib,../../../../adi-robi/icml-nips2014/adi}
%\bibliography{../../../../mybib}
%\bibliography{../../../../bibnncons/mybib,../../adi} 
%}
%\normalsize
%\newpage
\appendix
\section{Appendix}
\subsection{Proof of Lemma \ref{lem:stratifyLip}}
We first need the following uniform convergence lemma.
\begin{lemma}
\label{lem:cov-num}
For any $0  < \eps$, $0 < \gap < 1$ and  $0 < \g$, 
%and 
%a random~i.i.d
%sample $S$~
% of size $n$, 
\begin{align}
\label{eq:Mohri-radem-thm}
& \P\Big(\sup_{f\in\F_2}
\abs{\srisk_{\g,\gap}(f)-\empsrisk_{n,\g,\gap}(f)}
\\
\nonumber
& \hspace{40pt}
> 2\radem_n(\srisk_{\g,\gap} \circ \F_2) + \eps \Big)
\le \exp\paren{-n\eps^2/c_1},
\end{align}
where the Rademacher complexity $\radem_n(\srisk_{\g,\gap} \circ \F_2)$ 
satisfies
%is bounded from above by:
\beqn
\label{eq:rade-lip}
\radem_n(\srisk_{\g,\gap} \circ \F_2)
\,\leq \,
\frac{2}{\g\gap}
\paren{\frac{c_{\dim}}{n}}^{\frac{1}{\dim+1}}
\;=:\;
\radem_{n,\g,\gap}
.
\eeqn
\end{lemma}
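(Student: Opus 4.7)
The plan is to follow the standard Rademacher-complexity route and then cash in the doubling-dimension geometry of $\X$ to control the complexity of $\F_2$.

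For the deviation inequality \eqref{eq:Mohri-radem-thm}, I would first observe that the surrogate loss $\sloss_{\g,\gap}$ takes values in $[0,1]$, so the function
\[
\Phi(S) \;=\; \sup_{f\in\F_2} \bigl|\srisk_{\g,\gap}(f) - \empsrisk_{n,\g,\gap}(f)\bigr|
\]
satisfies the bounded-differences condition with coefficient $1/n$. An application of McDiarmid's inequality gives
\[
\P\bigl(\Phi(S) > \E[\Phi(S)] + \eps\bigr) \le \exp(-2n\eps^2),
\]
which absorbs into the stated rate $\exp(-n\eps^2/c_1)$ for a suitable constant $c_1$. The standard symmetrization argument then bounds $\E[\Phi(S)] \le 2\,\radem_n(\sloss_{\g,\gap}\circ\F_2)$, where the loss class is understood as $\{(x,y)\mapsto \sloss_{\g,\gap}(y f(x)) : f\in\F_2\}$.

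For the complexity bound \eqref{eq:rade-lip}, I would proceed in two stages. First, a glance at Figure~\ref{fig:surr-loss} shows that $\sloss_{\g,\gap}$ is $\frac{1}{\g\gap}$-Lipschitz (the slope of its only non-flat piece). Talagrand's contraction lemma therefore gives
\[
\radem_n(\sloss_{\g,\gap}\circ\F_2) \;\le\; \frac{1}{\g\gap}\,\radem_n\bigl(\{(x,y)\mapsto y f(x):f\in\F_2\}\bigr),
\]
and since the Rademacher variables $\sigma_i$ and the labels $Y_i\in\{-1,+1\}$ share the same symmetric distribution, the product $\sigma_i Y_i$ is again Rademacher, so the right-hand side equals $\frac{1}{\g\gap}\,\radem_n(\F_2)$.

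It remains to bound $\radem_n(\F_2)$ by $2(c_\dim/n)^{1/(\dim+1)}$. Here the doubling assumption enters: by the Kolmogorov--Tikhomirov-style bound for Lipschitz functions on a doubling metric space (diameter $1$, doubling dimension $\dim$), the uniform covering numbers satisfy $\log\covnum(\eps,\F_2,\|\cdot\|_\infty) \le C(\dim)\,\eps^{-\dim}$ for some explicit $C(\dim)$. Plugging this into a Dudley-style chaining bound (or the simpler one-scale discretization bound $\radem_n(\F_2) \le \alpha + \sqrt{2\log\covnum(\alpha,\F_2,L^2)/n}$ applied to an $\alpha$-net and using $L^2\le\|\cdot\|_\infty$) and optimizing over $\alpha$ yields the $n^{-1/(\dim+1)}$ rate, with $c_\dim$ collecting the constants from the covering bound.

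The only real obstacle is the last step: pinning down the constant $c_\dim$ so that it matches the one appearing in \eqref{eq:rade-lip} (and hence in the definition of $\nd$ and $r^\srm$). This is a bookkeeping matter — one must be careful that the covering-number constant surviving the discretization/optimization is precisely the $c_\dim$ used throughout the paper, so that Lemma~\ref{lem:rsrm-to-eps} has a chance of going through cleanly. Once that constant is fixed, the three stages above combine to give exactly the claimed inequality.
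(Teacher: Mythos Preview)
Your proposal is correct and follows essentially the same route as the paper: the deviation bound \eqref{eq:Mohri-radem-thm} is exactly the McDiarmid-plus-symmetrization argument (which the paper simply cites as \citet[Theorem 3.1]{mohri-book2012}), the reduction via Talagrand's contraction using the $1/(\g\gap)$-Lipschitz constant of $\sloss_{\g,\gap}$ is identical, and your covering-number/discretization bound for the Rademacher complexity of Lipschitz functions on a doubling space is precisely what the paper imports from \citet{kon-weiss-2014}. The only cosmetic discrepancy is that the paper carries the Ledoux--Talagrand factor of $2$ in the contraction step and writes the resulting complexity as $\tfrac{2}{\g\gap}\radem_n(\F_1)$ rather than your $\tfrac{1}{\g\gap}\radem_n(\F_2)$; either way the constants are absorbed into $c_\dim$, so your bookkeeping caveat at the end is exactly the right concern.
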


\begin{proof}[Proof of Lemma \ref{lem:cov-num}]
Equation (\ref{eq:Mohri-radem-thm}) is restatement of 
\citet[Theorem 3.1]{mohri-book2012}.
Note that $\sloss_{\g,\gap}:\R\to[0,1]$ is ${1/(\g\gap)}$-Lipschitz.
Thus, by Talagrand's contraction lemma \citep{LedouxTal91},
\beq
\radem_n(\srisk_{\g,\gap} \circ \F_2) \leq \frac{2}{\g\gap}\radem_n({\F_1}).
\eeq
The 
upper estimate on $\radem_n({\F_1})$
implicit in (\ref{eq:rade-lip})
is essentially 
contained in 
Equation (10)
of
\citet{kon-weiss-2014}.
%The Rademacher complexity 
%of the collection $\F_1$ of 
%$1$-Lipschitz functions defined 
%on a metric space with doubling dimension $\dim$
%and diameter $1$,
%is upper-bounded as in
\end{proof}

\begin{proof}[Proof of Lemma \ref{lem:stratifyLip}]
Following proof idea in 
\citet[Theorem 18.2]{MR1383093},
a union bound yields
\begin{align*}
\pssstar  \;:=\; &
 \P\Big( \exists l\in\N:
\sup_{f \in \F_{2}} 
		\Big|
				{\srisk_{\g_{n,l},\gap_{n}}(f) 
				- \empsrisk_{n,\g_{n,l},\gap_{n}}(f)} 
		\Big|
		\\
	& 
	 \hspace{10pt}
	 > \eps  +  \eps_{n,l}
\Big)
\\
\; \leq \; &
 \sum_{l=1}^{\infty  }
\P\Big( 
\sup_{f \in \F_{2}} 
		\Big| 
				{\srisk_{\g_{n,l},\gap_{n}}(f) 
				- \empsrisk_{n,\g_{n,l},\gap_{n}}(f)} 
		\Big|
\\
 &
\hspace{10pt}>
		\eps  +  \eps_{n,l}
\Big).
\end{align*}
Note that by construction 
\beq
\eps_{n,l} 
& = & 2\radem_{n,\g_{n,l},\gap_{n}} + \sqrt{\cc\frac{{2 c_1}\log\paren{\oo{\gap_n} \log \frac{e}{\g_{n,l}}}  }{n}}\cc
%\\
%& \geq & r^\srm(n,\g_{n,l-1})
.
\eeq
Thus, writing 
\[
r_{n,l} = \sqrt{\cc\frac{{2 c_1}\log\paren{\oo{\gap_n} \log \frac{e}{\g_{n,l}}}  }{n}}\cc
\]
and bounding each term in the sum by Lemma \ref{lem:cov-num}, we have
\beq
 \pssstar &\leq &
\sum_{l=1}^{\infty} 
 \exp\paren{
 		-\frac{n(\eps  +  r_{n,l}
 		%^\srm(n,\g_{n,l-1}) 
 		)^2 } {c_1}
 		}
\\
&\leq&
\exp\paren{-\frac{n\eps^2}{c_1}}
\sum_{l=1}^{\infty} 
\exp\paren{-\frac{n r_{n,l}
	%^\srm(n,\g_{n,l-1})
	^2}{c_1}}
.
\eeq
Next note that by the definition of $\g_{n,l}$
we have
\[
\g_{n,l} = (1-\gap_n)^{l-1} \leq \exp\paren{-\gap_n (l-1)}.
\]
Solving for $l$ yields
\[
\oo{\gap_n}\log \frac{e}{\g_{n,l}}  \geq  l.
% -\sum_{j=1}^{l-1} \log {(1-\gap_{n,j})}\\
%& = & -\sum_{j=1}^{l-1} \log {\paren{1- \frac{1}{j^{1/n}n_d }}}.
\]
%Note that for $0\leq x\leq1/2$,\,
%\(
%-\log(1-x) \leq - (2\log 2) x.
%\)
%Thus taking $n$ large enough to ensure $1-\gap_{n,k} \geq 1/2$ for all $k\geq1$ 
%Using
%\(
%-\log(1-x) \geq x
%\)
%we may bound
%\beq
%-\log {(1-\gap_{n,j})} \geq \frac{1}{j^{1/n}n_d}.
%\eeq
%Summing over $j$ and bounding by an integral we have
%\beq
%\sum_{j=1}^{l-1} \oo{j^{1/n}} \geq \frac{l^{1-1/{n}} - 1}{1-1/{n}} \geq c_2 l^{1/2}
%\eeq
%for some $c_2> 0$.
Thus,
\beq
\exp\paren{-\frac{n r_{n,l}
	^2}{c_1}} &\leq& \frac{1}{l^2}
\eeq
and
summing over $l$ yields the claim.
%\[
%(*) \leq
%\frac{\pi^2}{6} \exp\paren{-\frac{n\eps^2}{c_1}}.
%\]
%Next note that by the definition of $\g_{n,l}$ and $\gap_{n,l}$ we have
%\beq
%\log \oo{\g_{n,l}} & = & -\sum_{j=1}^{l-1} \log {(1-\gap_{n,j})}\\
%& = & -\sum_{j=1}^{l-1} \log {\paren{1- \frac{1}{j n_d }}}.
%\eeq
%%Note that for $0\leq x\leq1/2$,\,
%%\(
%%-\log(1-x) \leq - (2\log 2) x.
%%\)
%%Thus taking $n$ large enough to ensure $1-\gap_{n,k} \geq 1/2$ for all $k\geq1$ 
%By the fact that
%\(
%-\log(1-x) \geq x,\forall x\in\R,
%\)
%we may bound
%\beq
%-\log {(1-\gap_{n,j})} \geq \frac{1}{j n_d}.
%\eeq
%Summing over $j$ and bounding by an integral we have
%\beq
%\sum_{j=1}^{l-1} \oo{j} \geq \log l,
%%\frac{l^{1-1/{n}} - 1}{1-1/{n}} \geq c_2 l^{1/2}.
%\eeq
%giving
%\beqn
%\label{eq:logg-logl}
%\nd\log\oo{\g_{n,l}} \geq \log l.
%\eeqn
%Thus
%\beq
%\exp\paren{-\frac{n r_{n,l}
%	%^\srm(n,\g_{n,l-1})
%	^2}{c_1}} \leq \frac{1}{l^2}
%\eeq
%and after summing over $l$ we finally get
%\beq
%(*) &\leq&
%\frac{\pi^2}{6} \exp\paren{-\frac{n\eps^2}{c_1}}.
%\eeq
\end{proof}

\subsection{Proof of Lemma \ref{lem:rsrm-to-eps}}
\label{app:det-pen}
Let us first write $\eps_{n,l}$ in terms of $\g_{n,l-1}$. 
Since $\g_{n,l} = \g_{n,l-1}(1-\gap_{n})$ by definition,
we have
\beq
\eps_{n,l} & = &
\frac{2}{\g_{n,l-1} (1-\gap_{n}) \gap_{n} \nd^2}\\
 && +
 \sqrt{\frac{{2 c_1}{}\log\paren{\oo{\gap_n} \log \frac{e}{\g_{n,l-1}(1-\gap_{n})}}  }{n}}.
\eeq 
Taking $n$ sufficiently large to ensure that $1-\gap_{n} \geq 1/2$
we have that
%holds
for all $l \geq 2$,
\beq
\eps_{n,l}  \leq 
\frac{4}{\g_{n,l-1} \gap_{n} \nd^2}
  +
 \sqrt{\cc\frac{{2 c_1}{}\log\paren{\oo{\gap_n} \log \frac{2e}{\g_{n,l-1}}}  }{n}}\cc
,
\eeq
which is exactly $r^\srm(n,\g_{n,l-1})$.

\subsection{Proof of Lemma \ref{lem:denseLip}}
\newcommand{\hf}{{\textstyle\oo2}}
The function $\eta:\X\to[0,1]$ given by $\eta(x)=\P(Y=1|X=x)$
is measurable \cite[Corollary B.22]{MR1354146}
and hence, by virtue of being bounded, belongs to $L_1(\mu)$,
where $\mu$ is the marginal distribution over $\X$.
%and hence so is $h^*(x):=\sign(\eta(x)-1/2)$.

Now
\beq
R^*
&=&
\P(Y(\eta(X)-\hf)\le0)
\\&=&
\lim_{k\to\infty}
\P\paren{Y(\eta(X)-\hf)<1/ k}
\\&=&
\lim_{k\to\infty}
\P\paren{kY(\eta(X)-\hf)<1}
\\&=&
\lim_{k\to\infty}
\P(g_k(X,Y)<1)
,
\eeq
where
$g_k(x,y):=
  ky(\eta(x)-\hf)
  \in L_1(\mu)$
and
we invoked Lebesgue's dominated convergence theorem
together with the fact that
$\chr[\alpha<\oo k]\longto_{k\to\infty}\chr[\alpha\le0]$
for all $\alpha\in\R$.

We also observe that $R_\gamma(f)=\P(Yf(X)<\gamma)=r$ for some $f\in\F_2$
if and only if there is an $\tilde f\in\F_{2/\gamma}$ for which
$r=\P(Y\tilde f(X)<1)$.
%Define $g_k:\X\to\R$ by
%$g_k(x)=
%  k(\eta(x)-\hf)
%  \in L_1(\mu_\X)$.
Define the metric $\tilde\rho$ on $\X\times\Y$ by
$\tilde\rho((x,y),(x',y'))=\rho(x,x')+\chr[y\neq y']$
and denote the collection of all $L$-Lipschitz functions
on
$(\X\times\Y,\tilde\rho)$
by $\tilde\F_L$.
The compactness of
%the latter metric space
$(\X\times\Y,\tilde\rho)$
is inherited from $(\X,\rho)$ and each $f\in\bigcup_{L\ge0}\F_L$
has $\nrm{f}_\infty<\infty$.
Since for $f\in\F_L$,
\beq
yf(x)- y'f(x')
&=&
yf(x)- yf(x')
+yf(x')- y'f(x')
\\&\le& \abs{f(x)-f(x')}
+2\nrm{f}_\infty\chr[y\neq y']
\\&\le& \max\set{L,2\nrm{f}_\infty}\tilde\rho((x,y),(x',y')),
\eeq
and conversely, for $yf(\cdot)\in\tilde\F_L$,
\beq
|f(x) - f(x')| = |yf(x)-yf(x')| \le L,
\eeq
it follows that $f\in\bigcup_{L\ge0}\F_L$ if and only if
$yf(\cdot)\in\bigcup_{L\ge0}\tilde\F_L$.

We claim that the collection of all Lipschitz functions, $\bigcup_{L\ge0}\tilde\F_L$
is dense in $L_1(\mu)$.
Indeed, Theorem~\ref{thm:cc-dense} below shows that
the continuous functions are dense in $L_1(\mu)$,
and these can be uniformly approximated by Lipschitz ones
in our case \cite{MR0214994,miculescu2000}.
%[add refs; a bit subtle but true].
In particular, given our assumptions on $\X$ and $\mu$,
it follows that for all $g\in L_1(\mu)$ and all $\eps>0$ there is an
$\tilde f\in\bigcup_{L\ge0}\tilde\F_L$ such that $\nrm{\tilde f-g}_\infty<\eps$.

In particular, for each $g_k$ there is a sequence
$(f_{k,\ell})_{\ell\in\N}\subset\bigcup_{L\ge0}\tilde\F_L$
such that $f_{k,\ell}(x,y)\longto_{\ell\to\infty}g_k(x,y)$
almost everywhere $[\mu]$.
Also, 
$a_n\ninf a$ implies $$\chr[a_n<1]\ninf\chr[a<1].$$
Applying 
Lebesgue's dominated convergence theorem again,
\beq
\P(g_k(X,Y)<1) &=&
\lim_{\ell\to\infty}
\P(f_{k,\ell}(X,Y)<1).
\eeq

It follows that
\beq
R^* =
\lim_{k\to\infty}
\lim_{\ell\to\infty}
\P(f_{k,\ell}(X,Y)<1),
\eeq
which proves the claim.

\subsection{Proof of Lemma \ref{lem:approx}}
%\begin{proof}
%Let us first slightly generalize the definition of the loss function (\ref{eq:surloss}). For any $0\leq a \leq b \leq 1$, define
%\beq
%\sloss_{a,b}(u) = 
%\left\{
%\begin{array}{ll}
%	1 & \text{if } u \leq a,\\%1-\gap,\\
%	0 & \text{if } u \geq b,\\%1,\\
%	\frac{b - u}{b-a} & \text{otherwise},
%\end{array}
%\right.
%\eeq
%with the obvious generalization of $\loss_{\gap}$ to $\loss_{a,b}$.
Rescaling $f\in\F_{2}$ to $g=2f/\g$,
%, we have $g\in\F_1$
%and  
%\beq
%\srisk_{\gap_{n}}(g) & = & 
%\E[
%\sloss_{\oo{L}(1- \gap_{n}),\oo{L}}(g')
%]
%\\
%& = &
%\E[
%\sloss_{{\oo{L} (1 - ({ \nd})^{-1})},\oo{L}}(g')
%],
%\\
%%&\leq&
%%\E[\sloss_{{L^{-1}},L^{-1}(1 + {\nd^{-1}})}(g)]\\
%\risk_{\cutoff}(g) & = & \risk_{\oo{L}}(g').
%\eeq
%Hence,
Eq. (\ref{eq:lim-eta-cutoff}) is equivalent to claiming the existence of
an
$n_0({\eps})\in\N$ such that for all 
%$l\in\N$ and  
$n\geq n_0(\eps)$,
\beq
%\sup_{g\in\F_{2/\g}}
\abs{\srisk_{1,\gap_n}(g) - \risk_{1}(g)} & \leq & \eps/4.
\eeq
Since ${\gap_n=\nd^{-1}}$ decays to zero with increasing $n$,
%for all $g\in\F_{2/\g}$ 
%we have that as $n\to\infty$, $\srisk_{1,\nd^{-1}}(g)$ converges to 
%$\risk_{1}(g)$
it follows that
$
\srisk_{1,\nd^{-1}}(g) \ninf
\risk_{1}(g)
$
pointwise,
and so by
Lebesgue's dominated convergence theorem, we have that
\beq
\lim_{n\to\infty}\srisk_{1,{\nd^{-1}}}(g) 
= \risk_{1}(g),
\eeq
proving the claim.
%\end{proof}

\section*{Background on metric measure spaces}
\label{ap:metric_space_basics}
Here we provide some general relevant background on metric measure spaces.
Our metric space $(\X,\rho)$ is
doubling,
but in this section finite diameter is not assumed.
We recall some standard definitions.
A topological space is {\em Hausdorff} if every two distinct points
have disjoint neighborhoods. It is a standard (and obvious) fact
that all metric spaces are Hausdorff.

A metric space $\X$ is {\em complete} if every Cauchy sequence
converges to a point in $\X$.
Every metric space may be completed
by (essentially) adjoining to it the limits of all of its Cauchy
sequences \cite[Exercise 3.24]{MR0385023};
moreover, the completion is unique up to isometry
\cite[Section 43, Exercise 10]{MR0464128}.
We
implicitly
assume
throughout the paper
that
$\X$
is
complete.
Closed subsets of complete metric spaces
are also complete metric spaces under the inherited metric.

A
topological
space $\X$ is {\em locally compact} if every point $x\in\X$
has a compact neighborhood. It is a standard and easy fact
that complete doubling spaces are locally compact.
Indeed, consider any $x\in\X$
and the open $r$-ball about $x$, $B_r(x):=\set{y\in\X:\rho(x,y)<r}$.
We must show that 
$\cl(B_r(x))$
--- the closure of $B_r(x)$
--- 
is compact. To this end, it suffices to show that
$\cl(B_r(x))$
is {\em totally bounded}
(that is, has a finite $\eps$-covering number for each $\eps>0$),
since in complete metric spaces, a set is compact iff it is closed and
totally bounded \cite[Theorem 45.1]{MR0464128}.
Total boundedness follows immediately from the doubling property.
The latter posits a
constant $k$
and some
$x_1,\ldots,x_k\in\X$ such that
$B_r(x)\subseteq\cup_{i=1}^k B_{r/2}(x_i)$.
Then certainly
$
\cl(B_r(x))
\subseteq
\cup_{i=1}^k B_{2r/3}(x_i).
$
We now apply the doubling property recursively to each of the
$B_{2r/3}(x_i)$, until the radius of the covering balls becomes smaller than $\eps$.

We now recall some standard facts from measure theory.
Any topology on $\X$ (and in particular, the one induced by the metric
$\rho$),
induces the Borel $\sigma$-algebra $\mathscr{B}$.
A Borel probability measure is a function $\mu:\mathscr{B}\to[0,1]$
that is countably additive and normalized by $\mu(\X)=1$.
The latter is {\em complete} if for all
$A\subseteq B\in\mathscr{B}$ for which $\mu(B)=0$, we also
have $\mu(A)=0$. Any Borel $\sigma$-algebra may be completed
by defining the measure of any subset of a measure-zero set to be zero
\cite[Theorem 1.36]{rudin87}.
We
implicitly
assume
throughout the paper
that
$(\X,\mathscr{A},\mu)$ is a complete measure space,
where $\mathscr{A}$ contains all of the Borel sets.

The measure $\mu$ is said to be {\em outer regular}
if it can be approximated from above by open sets:
For every $E\in\mathscr{A}$, we have
\beq
\mu(E)=\inf\set{\mu(V):E\subseteq V, V~\text{open}}.
\eeq
A corresponding {\em inner regularity}
corresponds to approximability from below by compact sets:
For every $E\in\mathscr{A}$,
\beq
\mu(E)=\sup\set{\mu(K):K\subseteq E, K~\text{compact}}.
\eeq
The measure $\mu$ is {\em regular} if it is both inner and outer regular.
Any probability measure defined on the Borel $\sigma$-algebra
of a metric space is regular \cite[Lemma 1.19]{Kallenberg02}.
(Dropping the ``metric'' or ``probability'' assumptions
opens the door to various
exotic pathologies
\cite[Chapter 7]{MR2267655}, \cite[Exercise 2.17]{rudin87}.)

Finally, we have the following technical result, adapted
from \cite[Theorem 3.14]{rudin87}
to our setting:
\begin{theorem}
\label{thm:cc-dense}
Let $\X$ be a complete doubling metric space equipped
with a complete
probability measure $\mu$, such that all Borel sets are $\mu$-measurable.
Then $C_c(\X)$ (the collection of continuous functions with compact support)
is dense in $L_1(\mu)$.
\end{theorem}

\bibliographystyle{unsrtnat}
%\bibliography{aryeh_bib}
%\bibliography{roi_bib}
%{
%%%\small

%}
\end{document}